\documentclass{article}

\usepackage{microtype}
\usepackage{graphicx}
\usepackage{subcaption}
\usepackage{booktabs}
\usepackage[table]{xcolor}
\usepackage{colortbl}
\usepackage{multirow}

\usepackage{hyperref}

\usepackage[numbers]{natbib}
\usepackage[accepted]{icml2026}

\usepackage{amsmath}
\usepackage{amssymb}
\usepackage{mathtools}
\usepackage{amsthm}
\usepackage{booktabs}
\usepackage{multirow}
\usepackage{makecell}
\usepackage{graphicx}

\usepackage[capitalize,noabbrev]{cleveref}

\theoremstyle{plain}
\newtheorem{theorem}{Theorem}[section]

\theoremstyle{definition}
\newtheorem{definition}[theorem]{Definition}

\theoremstyle{remark}

\usepackage[textsize=tiny]{todonotes}

\icmltitlerunning{Post-Hoc Merging is Not Enough: Many-Shot Model Merging with Loss-Gap Balancing}
\definecolor{proposedblue}{HTML}{EAF5FA}

\begin{document}

\twocolumn[
\icmltitle{Post-Hoc Merging is Not Enough: Many-Shot Model Merging with Loss-Gap Balancing}

\icmlsetsymbol{equal}{*}

\begin{icmlauthorlist}
    \icmlauthor{Kyungjin Im}{equal,ssu} 
    \icmlauthor{Miru Kim}{equal,skku}
    \icmlauthor{Chanin Eom}{equal,ssu}
    \icmlauthor{Minhae Kwon}{skku}
  \end{icmlauthorlist}

  \icmlaffiliation{ssu}{Soongsil University, Republic of Korea}
  \icmlaffiliation{skku}{Department of Electrical and Computer Engineering, Sungkyunkwan University, Republic of Korea}
  \icmlcorrespondingauthor{Minhae Kwon}{minhae.kwon@skku.edu}

  \vskip 0.3in
]
\printAffiliationsAndNotice{\icmlEqualContribution}

\begin{abstract}

Model merging has become a practical post-training strategy for building a single multi-task large language model (LLM) by combining multiple task-specialized models. However, most existing approaches rely on post-hoc merging, in which task-specific models are merged only once after training. This one-shot aggregation often suffers from task interference, leading to \textit{information erasure} across individual tasks. In this work, we show that replacing post-hoc merging with an iterative \textit{many-shot merging} protocol is effective in improving multi-task performance. Building on this insight, we propose \textbf{METIS}, \textbf{M}itigating \textbf{E}rasure from \textbf{T}ask \textbf{I}nterference for \textbf{S}table many-shot merging. METIS is a loss-aware many-shot merging method that addresses information erasure in post-hoc merging through task-wise loss-gap weighting and consensus-based masking. Notably, METIS exhibits significant performance improvement on the worst-performing task, effectively mitigating information erasure. {\texttt{Project Page: \url{https://imkyungjin.github.io/METIS/}}}
\end{abstract}
\vspace{-5mm}
\section{Introduction}
LLMs have achieved remarkable success across a wide range of tasks, including natural language understanding~\cite{dong2019unified,touvron2023llama,cha2026reasoning, park2026reflex, kim2026harnessing, kim2025storm} and mathematical reasoning~\cite{cobbe2021training,liu2025highmath}. As a result, a central goal in recent LLM research has shifted from excelling at individual tasks toward enabling strong multi-task capability within a single model. However, the massive parameter scale of modern LLMs makes training a task-general model via post-training prohibitively expensive. While LoRA mitigates this cost by updating only a small subset of parameters~\cite{hu2022lora}, achieving effective multi-task capability within a single model remains a fundamental challenge.

\begin{figure}[t]
    \centering
    \includegraphics[width=0.85\linewidth]{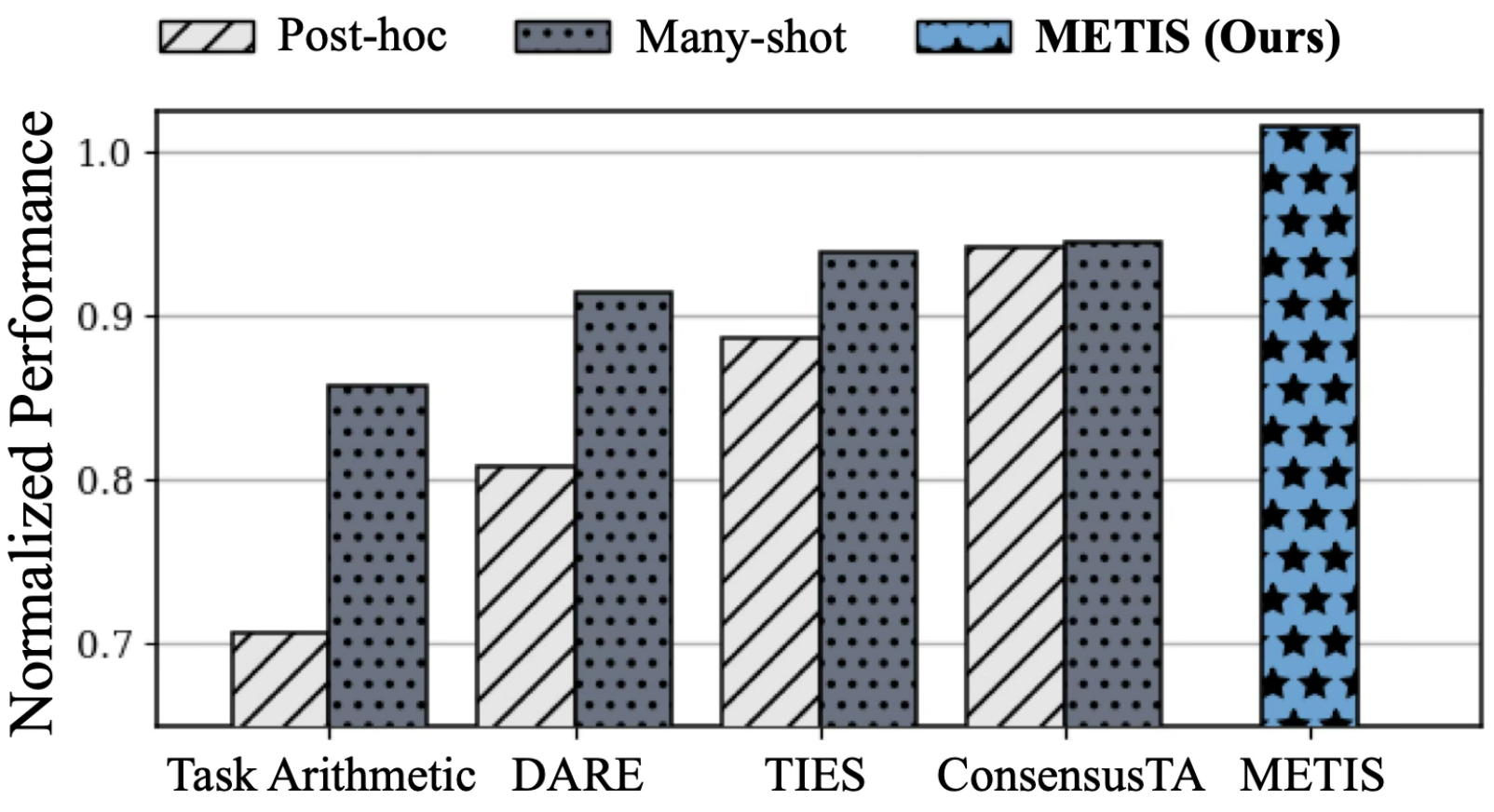}
    \caption{Performance comparison of many-shot and post-hoc merging across representative methods using Llama-3.2-3B.}
    \label{fig:introduction}
    \vspace{-0.5cm}
\end{figure}

To address this challenge, \textit{model merging} has emerged as an effective post-training paradigm for constructing multi-task LLMs by aggregating task-specific post-trained models~\cite{ilharco2023,tam2023,wang2024localizing,jin2023dataless,yu2024dare,zhang2023composing}. By reusing independently optimized task models, it provides a scalable alternative to joint multi-task training. However, despite its strong empirical performance, model merging is inherently susceptible to \textit{information erasure} caused by \textit{task interference}~\cite{wang2024localizing,yadav2023ties}, which is commonly attributed to \textit{model drift}: task-specific updates push models toward disparate task-optimal regions in parameter space, making naive aggregation prone to overwriting task-specific knowledge~\cite{wang2024localizing,sun2025cat,sun2025towards}.

From an optimization and distributed learning perspective, such drift can be mitigated through frequent parameter aggregation, which constrains models within a shared parameter neighborhood and limits task-wise divergence~\cite{li2019fedavg,varno2022adabest,chen2025fedmerge}. However, most existing model merging methods rely on a \textit{post-hoc} strategy, performing a single merge after fully post-training task-specific models~\cite{wang2024localizing,yu2024dare,yadav2023ties,wortsman2022a,marczak2025notask,gargiulo2025task,skorobogat2025subspace}. This one-shot integration induces abrupt cross-task interference, exacerbating information erasure and ultimately constraining multi-task performance. These observations raise the following question.
\begin{center}
    \textit{How can we improve the multi-task capability of large language models through model merging, without erasing task-specific knowledge?}
\end{center}

We argue that moving beyond post-hoc merging toward \textit{many-shot} merging provides a principled answer to this question. In the many-shot merging framework, task-specific models are integrated through a sequence of incremental merging steps. By gradually introducing cross-task interactions, many-shot merging better aligns with the iterative nature of optimization and mitigates the abrupt parameter shifts induced by one-shot aggregation. This integration enables the merged model to adapt to task interactions over time, thereby reducing destructive interference arising from heterogeneous task updates.

Figure~\ref{fig:introduction} provides empirical support for our claim. We investigate whether simply adopting a many-shot merging scheme can already improve upon conventional post-hoc merging. Specifically, we compare the multi-task performance of post-hoc and many-shot merging across representative post-hoc merging baselines, including Task Arithmetic~\cite{ilharco2023}, DARE~\cite{yu2024dare}, TIES~\cite{yadav2023ties}, and ConsensusTA~\cite{wang2024localizing}. Notably, transitioning from post-hoc to many-shot merging consistently improves multi-task performance across all baseline methods. These results suggest that iterative integration is a key factor in effective multi-task model merging. However, while many-shot merging reduces destructive interference by gradually introducing cross-task interactions, it does not explicitly control how heterogeneous task updates are integrated during the iterative merging process.

Motivated by these findings, we propose \textbf{METIS}, a novel merging method that mitigates information erasure under the many-shot merging framework. Rather than naively applying many-shot merging, METIS introduces a task-wise loss-gap-based weighting strategy that balances the contributions of task-specific models, thereby effectively alleviating task interference. In addition, we employ a consensus-based masking mechanism to enhance task localization, building on prior findings~\cite{wang2024localizing}. Together, these components enable stable iterative merging and substantially improve multi-task capability while preserving task-specific knowledge.

In this work, we make the following contributions. (i) We theoretically and empirically demonstrate that a simple transition from post-hoc merging to many-shot merging significantly improves the multi-task capability of LLMs. (ii) We propose a novel merging method based on task-wise loss gaps, which effectively mitigates information erasure and is supported by theoretical analysis. (iii) We introduce METIS, a many-shot merging framework that employs consensus-based masking and loss-gap-guided task-aware weighting. We show that the proposed method achieves superior multi-task performance across diverse and representative LLM benchmarks using a single merged model. Additionally, we further verify its robustness against information erasure and the worst-performing task. The notations used in this paper are presented in Appendix~\ref{app:A}.

\section{Related Works}

\paragraph{Model Merging.}
Model merging aims at integrating multiple fine-tuned models into a single model that retains the multi-task capabilities~\cite{wortsman2022a,matena2022fisher,yang2024adamerging}. Recent studies demonstrate that merging fine-tuned models from different tasks provides better initialization for new tasks~\cite{choshen2022,gueta2023} and improves out-of-distribution robustness~\cite{wortsman2022a,wortsman2022b,rame2022diverse,arpit2022ensemble}. Model merging also enables multi-task models~\cite{ilharco2023,jin2023dataless} and supports federated learning~\cite{li2019fedavg,mcm2017fedavg}, continual learning~\cite{li2023mergethencompress}, and other settings~\cite{li2022branch,donyehiya2022coldfusion}. This wide range of applications leads to the development of methods that improve beyond simple parameter averaging. 

One direction focuses on manipulating task vectors through scaling, pruning, or masking to reduce task interference. Task Arithmetic~\cite{ilharco2023} introduces a scaling factor that controls the magnitude of task vectors, balancing the influence of different tasks while keeping the hyperparameter search tractable with a single shared coefficient. TIES~\cite{yadav2023ties} addresses task interference through a three-step pipeline that trims small-magnitude parameters, selects a dominant sign, and merges only aligned weights. DARE~\cite{yu2024dare} applies random dropout to task vectors using Bernoulli masks and rescales the remaining parameters. ConsensusTA~\cite{wang2024localizing} further introduces task-specific binary masks and aggregates them via a consensus mechanism to retain parameters agreed upon across tasks. {\color{black}Another direction tackles task interference by decomposing model updates into structured subspaces, often using orthogonalization or singular value decomposition. Iso-C and Iso-CTS~\cite{marczak2025notask} propose an isotropic model merging by decomposing parameters into a common subspace shared across tasks and task-specific subspaces, ensuring balanced representation without bias toward dominant tasks. TSV-M~\cite{gargiulo2025task} leverages SVD to identify principal directions of task updates, reducing interference by aligning merging along dominant singular vectors. Subspace-Boosting~\cite{skorobogat2025subspace} emphasizes informative subspaces while suppressing noisy components, effectively boosting performance through subspace reweighting. While these approaches aim to mitigate task interference, they fundamentally operate under a post-hoc merging paradigm, which may still induce abrupt interactions and information loss. This suggests that addressing interference at the level of the merging process itself remains an open challenge.}

\paragraph{Iterative Model Merging Beyond Post-Hoc.}
Model merging methods have traditionally focused on combining independently fine-tuned checkpoints to integrate task-specific models. Recent studies show that post-hoc merging may have limitations~\cite{sun2025towards, he2025mergebench}, while iterative merging has been shown to improve reasoning and generalization. Prior work further demonstrates that models leveraging self-generated data and merged over multiple iterations achieve stronger reasoning performance~\cite{yuan2025ssir}. Iterative merging has also been shown to help preserve previously acquired knowledge while incorporating new task updates more effectively~\cite{chen2025fedmerge, dziadzio2025mergeovertime, shen2025mergeofthought}.

{\color{black}From a broader optimization perspective, this paradigm is closely related to distributed and federated learning, where models are updated through repeated aggregation steps rather than a single post-hoc merge~\cite{li2019fedavg, donyehiya2022coldfusion, park2026fedade, kim2025personalized, park2025asap, park2025personalized, kwon2026packet, jin2025alternating, kwon2025comparative}. Prior federated learning approaches perform iterative parameter updates across multiple rounds, enabling gradual alignment of heterogeneous task updates and reducing divergence across clients~\cite{li2019fedavg, donyehiya2022coldfusion}. These studies collectively suggest that iterative integration is a more effective paradigm for handling heterogeneous task updates compared to one-shot merging. The proposed method follows this perspective by explicitly mitigating information erasure while preserving task contributions, enabling more balanced knowledge integration across multiple tasks.}

\section{Many-Shot Merging Improves Multi-task Capability} \label{sec:many_shot}
In this section, we evaluate the effectiveness of many-shot merging by first introducing the system setting, followed by theoretical analysis and empirical validation.

\subsection{Many-Shot Merging Framework}
In this scheme, we consider total $T$ models, each associated with a unique task $\tau \in \mathcal{T}$, where $\mathcal{T} = \{1, \dots, T\}$ denotes the task set. Each model is trained on its own dataset $\mathcal{D}_\tau$ corresponding to task $\tau$. All models start from the same pre-trained parameter initialization $\Theta^\texttt{0}$ and are then independently trained using their respective task-specific losses $\mathcal{L}_\tau(\cdot)$.
\begin{align}
    \theta_\tau^\texttt{r} \leftarrow \Theta^\texttt{r-1} - \eta\nabla\mathcal{L}_\tau\left(\Theta^\texttt{r-1}\right) \label{eq:local_update}
\end{align}
In~\eqref{eq:local_update}, $\eta$ means learning rate and \( \texttt{r} \in \{\texttt{1}, \dots, \texttt{R}\} \) denotes the update round. After the local update, each model computes a task vector $\boldsymbol{v}_\tau$, defined as the difference between the locally updated parameters at round~$\texttt{r}$ and the initial model $\Theta^{\texttt{0}}$, i.e.,
$\boldsymbol{v}^{\texttt{r}}_\tau = \theta_\tau^{\texttt{r}} - \Theta^{\texttt{0}}$.
The merging phase then proceeds using these task vectors as follows.
\begin{align}
    \Theta^\texttt{r} = \mathcal{M}\left(\boldsymbol{v}_1^\texttt{r},\dots\boldsymbol{v}_\tau^\texttt{r},\dots,\boldsymbol{v}_T^\texttt{r}\right) \label{eq:merge_general}
\end{align}
Herein, $\mathcal{M}(\cdot)$ is a merging operator, which can be instantiated by various merging methods~\cite{ilharco2023,wang2024localizing,yu2024dare,yadav2023ties}. After many-shot merging over $\texttt{R}$ rounds, all models share an identical merged model $\Theta^\texttt{R}$. An illustrative example of many-shot merging and post-hoc merging is presented in Figure~\ref{fig:qguidance}.

\begin{figure}[!t]
    \centering
    \includegraphics[width=1.0\linewidth]{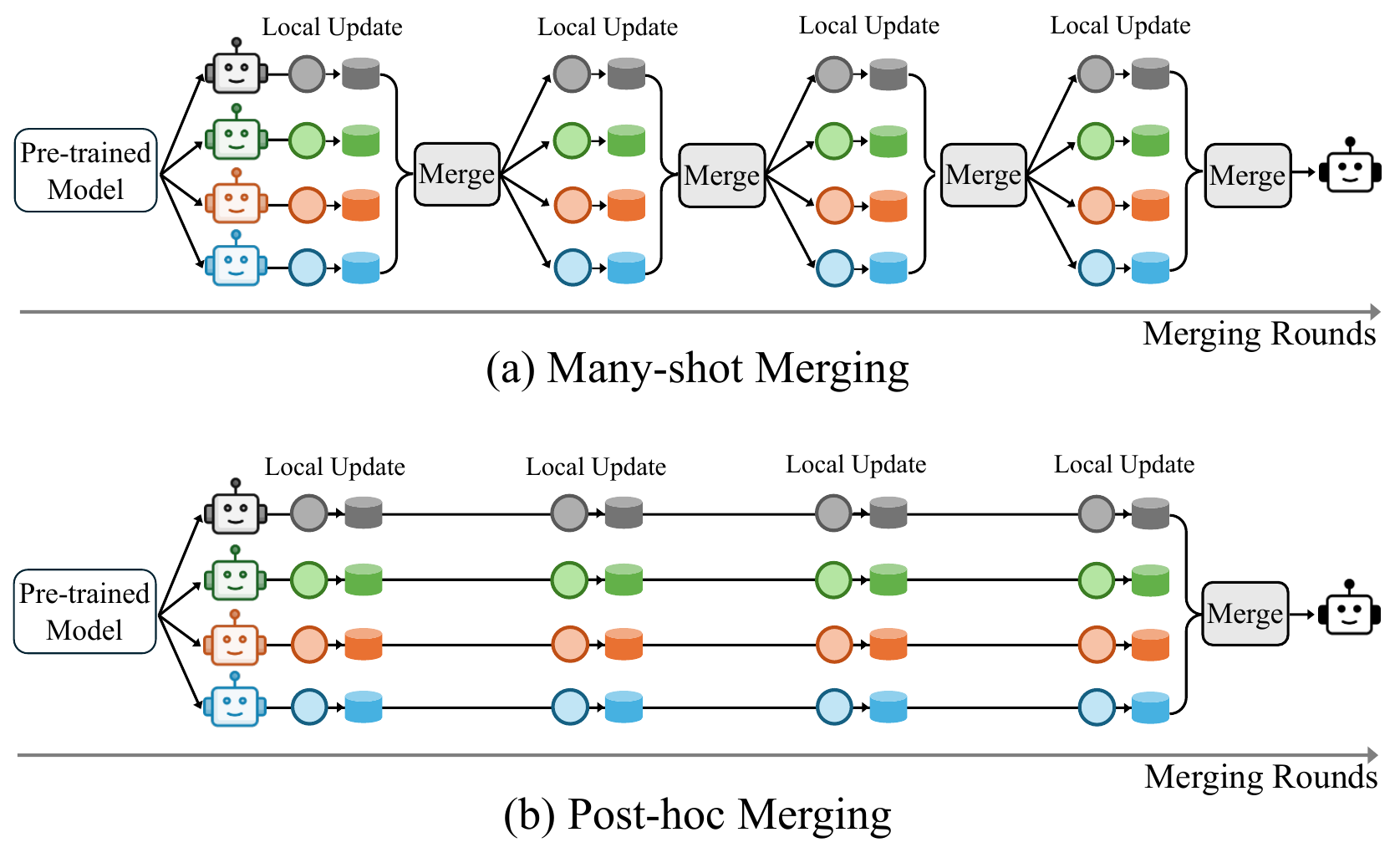}
    \caption{Overview of the merging frameworks: (a) many-shot merging and (b) post-hoc merging, under the same number of local updates.}
    \label{fig:qguidance}
\end{figure}

\subsection{Reducing Multi-Task Loss with Many-Shot Merging}
We observe that many-shot merging improves multi-task performance over post-hoc merging. In this subsection, we provide theoretical and empirical support for this observation using the following definition of the multi-task loss.
\begin{definition} [Multi-task Loss $\mathcal{E}(\Theta^\texttt{r})$]
Given the merged parameters at round $r$, denoted by $\Theta^\texttt{r}$, the multi-task loss $\mathcal{E}(\Theta^\texttt{r})$ is defined as the average of task-specific losses.
    \begin{align}
        \mathcal{E}(\Theta^\texttt{r})=\frac{1}{T}\sum_{\tau=1}^{T}\mathcal{L}_\tau\left(\Theta^\texttt{r}\right)
    \end{align}
\end{definition}
Since the multi-task loss $\mathcal{E}(\Theta^{\texttt{r}})$ aggregates task-specific losses, a lower value of $\mathcal{E}(\Theta^{\texttt{r}})$ indicates stronger multi-task generalization ability. From this perspective, we provide the following theoretical analysis supporting the improved multi-task capability of the many-shot merging strategy.

\begin{figure*}[!t]
    \centering
    \includegraphics[width=1\linewidth]{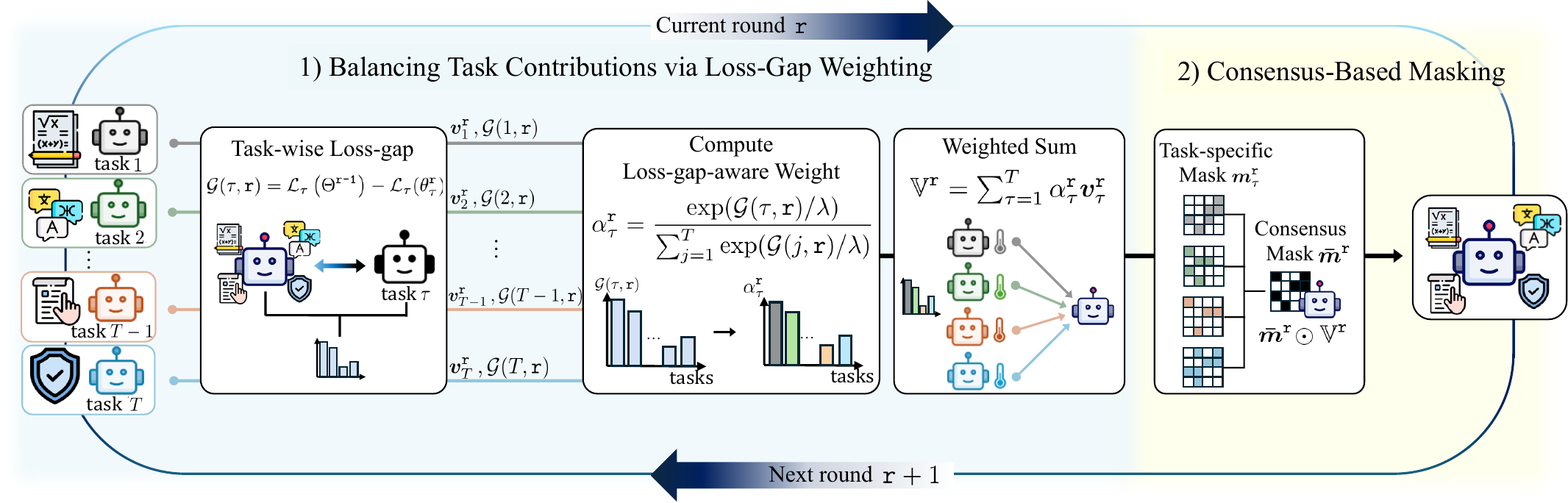}
    \caption{Overall framework of the proposed solution. The method balances task contributions via loss-gap-based weighting and enhances task localization through consensus-based masking within the many-shot merging framework.}
    \label{fig:framework}
\end{figure*}

\begin{theorem}[Multi-task Loss Reduction]
\label{theo:task_loss}
Suppose each task loss $\mathcal{L}_\tau(\cdot)$ is $L$-smooth,
and the learning rate satisfies $\eta \le 1/L$. Let $\bar{\Theta}^\texttt{R}$ denote the post-hoc merged model, and let $\Theta^\texttt{R}$ be the many-shot merged model obtained after $\texttt{R}$ rounds of local updates. Then, the following inequality is always satisfied under the condition $\Delta(\mathcal{E},\texttt{R})+ \frac{L}{2}\Delta(\xi,\texttt{R})\leq0$.
\begin{align}
\mathcal{E}\big(\Theta^\texttt{R}\big)-\mathcal{E}\big(\bar{\Theta}^{\texttt{R}}\big)\leq 0
\end{align}
\end{theorem}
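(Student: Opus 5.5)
The plan is to compare the two merged models through a common reference point, using the descent lemma implied by $L$-smoothness. The quantities $\Delta(\mathcal{E},\texttt{R})$ and $\Delta(\xi,\texttt{R})$ are not defined before the statement. I will therefore fix their meaning so that the stated condition is exactly what the argument needs:
\begin{itemize}
\item $\Delta(\mathcal{E},\texttt{R})$ is the difference between the linear (first-order) terms of the expansions for the two merged models;
\item $\Delta(\xi,\texttt{R})$ is the difference of the corresponding squared-displacement (drift) terms.
\end{itemize}

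\textbf{Step 1: expand each task loss around $\Theta^\texttt{0}$.} For each task $\tau$, $L$-smoothness gives
\begin{align*}
\mathcal{L}_\tau(\Theta)\le \mathcal{L}_\tau(\Theta^\texttt{0})+\langle\nabla\mathcal{L}_\tau(\Theta^\texttt{0}),\Theta-\Theta^\texttt{0}\rangle+\tfrac{L}{2}\|\Theta-\Theta^\texttt{0}\|^2 .
\end{align*}
I apply this with $\Theta=\Theta^\texttt{R}$ and with $\Theta=\bar{\Theta}^\texttt{R}$, then average over $\tau$ to get upper bounds on $\mathcal{E}(\Theta^\texttt{R})$ and $\mathcal{E}(\bar{\Theta}^\texttt{R})$. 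For the post-hoc model I also need a lower bound, since I am subtracting it. I would obtain this from convexity, or from the matching lower quadratic bound $-\tfrac{L}{2}\|\cdot\|^2$.

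\textbf{Step 1, alternative telescoping route (preferred).} A cleaner option is to telescope round by round. For the many-shot model I would write
\begin{align*}
\mathcal{E}(\Theta^\texttt{R})-\mathcal{E}(\Theta^\texttt{0})=\sum_{\texttt{r}}\big[\mathcal{E}(\Theta^\texttt{r})-\mathcal{E}(\Theta^\texttt{r-1})\big],
\end{align*}
and bound each increment by the descent lemma. For averaging-type $\mathcal{M}$, $\Theta^\texttt{r}-\Theta^\texttt{r-1}=-\frac{\eta}{T}\sum_\tau\nabla\mathcal{L}_\tau(\Theta^\texttt{r-1})$. With $\eta\le 1/L$, each round then decreases $\mathcal{E}$ by at least $\frac{\eta}{2}\|\nabla\mathcal{E}(\Theta^\texttt{r-1})\|^2$.

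For the post-hoc model, each task vector accumulates $\texttt{R}$ local steps along its own trajectory. The merged displacement therefore equals the many-shot displacement plus a drift error $\xi$. This error comes from evaluating gradients at the task-specific iterates rather than at the shared merged point.

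\textbf{Step 2: subtract the two bounds and collect terms.} The gap $\mathcal{E}(\Theta^\texttt{R})-\mathcal{E}(\bar{\Theta}^\texttt{R})$ splits into two pieces:
\begin{itemize}
\item a first-order piece, namely the difference of the accumulated inner products of gradients with displacements, which I identify as $\Delta(\mathcal{E},\texttt{R})$;
\item a curvature piece, $\frac{L}{2}$ times the difference of the squared drift norms, which I identify as $\frac{L}{2}\Delta(\xi,\texttt{R})$.
\end{itemize}
The two-sided quadratic bounds contribute the same factor $\frac{L}{2}$. The condition $\eta\le 1/L$ is used to absorb the $\frac{L\eta^2}{2}\|\nabla\|^2$ terms into the first-order ones, so that no leftover remainder appears. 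The gap is then at most $\Delta(\mathcal{E},\texttt{R})+\frac{L}{2}\Delta(\xi,\texttt{R})$, and the hypothesis that this is $\le0$ yields the claim.

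\textbf{Main obstacle.} The hard part is bookkeeping: making the residual terms land exactly in the form $\Delta(\mathcal{E},\texttt{R})+\frac{L}{2}\Delta(\xi,\texttt{R})$ with no extra slack. The delicate point is the lower bound needed for the subtracted post-hoc loss. I would first try to define $\Delta(\xi,\texttt{R})$ to include both quadratic contributions, one from each side. If that fails, I would fall back on assuming convexity, so that the lower bound is purely linear.
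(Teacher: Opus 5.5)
\textbf{Verdict: genuine gap.} The gap is the step you call bookkeeping, and it cannot be completed in the form you claim. Once you pair an upper bound for $\mathcal{E}(\Theta^{\texttt{R}})$ with a \emph{lower} bound for $\mathcal{E}(\bar\Theta^{\texttt{R}})$, the curvature terms carry $+\frac{L}{2}$ and $-\frac{L}{2}$, so after subtraction they \emph{add}. Anchoring at $\Theta^{\texttt{0}}$ gives
\[
\mathcal{E}(\Theta^{\texttt{R}})-\mathcal{E}(\bar\Theta^{\texttt{R}})\le\big\langle\nabla\mathcal{E}(\Theta^{\texttt{0}}),\Theta^{\texttt{R}}-\bar\Theta^{\texttt{R}}\big\rangle+\frac{L}{2}\Big(\|\Theta^{\texttt{R}}-\Theta^{\texttt{0}}\|^2+\|\bar\Theta^{\texttt{R}}-\Theta^{\texttt{0}}\|^2\Big),
\]
which is a sum, not the difference of drift norms you assert in Step~2. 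Convexity only deletes the second quadratic term; it never produces a minus sign.

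The telescoping route does not help. The condition $\eta\le 1/L$ absorbs $\frac{L\eta^2}{2}\|\nabla\mathcal{E}(\Theta^{\texttt{r-1}})\|^2$ only on the many-shot (upper-bound) side. Write $\bar\Theta^{\texttt{r}}=\frac{1}{T}\sum_\tau\bar\theta_\tau^{\texttt{r}}$ for the intermediate post-hoc averages. Their per-round lower bounds leave positive remainders $\frac{L}{2}\|\bar\Theta^{\texttt{r}}-\bar\Theta^{\texttt{r-1}}\|^2$ after subtraction. The first-order terms also involve gradients at different points ($\Theta^{\texttt{r-1}}$ versus the task iterates $\bar\theta_\tau^{\texttt{r-1}}$), so nothing cancels, and ``no leftover remainder'' is unjustified.

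Your fallback, letting $\Delta(\xi,\texttt{R})$ be the sum of both quadratic terms, is valid. But then $\Delta(\xi,\texttt{R})\ge 0$ is no longer a difference. And because you define both $\Delta$'s as ``whatever the argument needs'', the result collapses to a tautology: a nonpositive upper bound on the gap forces the gap to be nonpositive.

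\textbf{How the paper argues.} The paper anchors the expansion at each merged model instead. It uses $\Theta^{\texttt{R}}=\frac{1}{T}\sum_\tau\theta_\tau^{\texttt{R}}$, so $\frac{1}{T}\sum_\tau(\theta_\tau^{\texttt{R}}-\Theta^{\texttt{R}})=0$ kills the linear term. This gives $\mathcal{E}(\Theta^{\texttt{R}})\le\frac{1}{T}\sum_\tau\mathcal{E}(\theta_\tau^{\texttt{R}})+\frac{L}{2}\xi^{\texttt{R}}$, with dispersion $\xi^{\texttt{R}}=\frac{1}{T}\sum_\tau\|\theta_\tau^{\texttt{R}}-\Theta^{\texttt{R}}\|^2$. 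Hence $\Delta(\mathcal{E},\texttt{R})=\frac{1}{T}\sum_\tau\big(\mathcal{E}(\theta_\tau^{\texttt{R}})-\mathcal{E}(\bar\theta_\tau^{\texttt{R}})\big)$ and $\Delta(\xi,\texttt{R})=\xi^{\texttt{R}}-\bar\xi^{\texttt{R}}$. Neither matches your first-order/displacement reading, and $\eta\le 1/L$ is never used.

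\textbf{The paper's proof has the same flaw.} It subtracts the analogous \emph{upper} bound for $\bar\Theta^{\texttt{R}}$, which is exactly the move you flagged as illegitimate. The correct lower bound is $\mathcal{E}(\bar\Theta^{\texttt{R}})\ge\frac{1}{T}\sum_\tau\mathcal{E}(\bar\theta_\tau^{\texttt{R}})-\frac{L}{2}\bar\xi^{\texttt{R}}$; this is what the expansion the paper displays actually yields after averaging. Using it gives the bound $\Delta(\mathcal{E},\texttt{R})+\frac{L}{2}(\xi^{\texttt{R}}+\bar\xi^{\texttt{R}})$.

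A counterexample shows the difference form cannot follow from that argument. Take $\mathcal{E}(x)=\frac{L}{2}x^2$ on $\mathbb{R}$, post-hoc task models at $\pm1$, and many-shot task models both at $1$. Then $\Delta(\mathcal{E},\texttt{R})+\frac{L}{2}\Delta(\xi,\texttt{R})=-\frac{L}{2}$, yet the gap is $+\frac{L}{2}$. The paper's argument uses nothing beyond this averaging structure, so it cannot establish the difference form.

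Your suspicion is therefore well founded, but neither your sketch nor the paper's proof delivers the stated difference form. A sound version must either adopt the sum form or extract extra structure from the actual many-shot and post-hoc trajectories.
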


\begin{proof}[Proof sketch]
Since each task loss $\mathcal{L}_\tau(\cdot)$ is $L$-smooth, the aggregated multi-task objective is also $L$-smooth~\cite{boyd2004convex, smith2017federated}. By the $L$-smoothness inequality, $\mathcal{E}({\Theta}^\texttt{R})-\mathcal{E}(\bar{\Theta}^\texttt{R})$ is written as follows.
\begin{align}
&\mathcal{E}({\Theta}^\texttt{R})-\mathcal{E}(\bar{\Theta}^\texttt{R}) \notag \\ &\leq \frac{1}{T}\sum_{\tau=1}^T \mathcal{E}(\theta_\tau^{\texttt{R}})
+\frac{L}{2}{\xi}^\texttt{R}-\left(\frac{1}{T}\sum_{\tau=1}^T \mathcal{E}(\bar{\theta}_\tau^\texttt{R})
+\frac{L}{2}\bar\xi^\texttt{R}\right)\label{eq:theo1:first}
\end{align}
Then,~\eqref{eq:theo1:first} can be written as follows.
\begin{align}
&\frac{1}{T}\sum_{\tau=1}^T\left(\mathcal{E}(\theta_\tau^{\texttt{R}})-\mathcal{E}(\bar\theta_\tau^{\texttt{R}})\right)
+\frac{L}{2}\left({\xi}^\texttt{R}-\bar\xi^\texttt{R}\right) \notag \\
&=\Delta(\mathcal{E},\texttt{R})+\frac{L}{2}\Delta(\xi, \texttt{R})\leq 0 \label{eq:theo_fin}
\end{align}
The inequality in~\eqref{eq:theo_fin} always holds because $\Delta(\mathcal{E},\texttt{R})+\frac{L}{2}\Delta(\xi,\texttt{R})\leq0$ is guaranteed by the stated condition.\footnote{It is noteworthy that this condition readily holds. We provide empirical results in Appendix~\ref{app:B2} showing that representative merging methods consistently satisfy this condition.} The complete proof is provided in Appendix~\ref{app:B1}.
\end{proof}

\paragraph{Empirical Support.} Table~\ref{tab:manyshot_comparison} provides empirical support for Theorem~\ref{theo:task_loss}. Specifically, we compare representative post-hoc merging methods, including Task Arithmetic~\cite{ilharco2023}, DARE~\cite{yu2024dare}, TIES~\cite{yadav2023ties}, and ConsensusTA~\cite{wang2024localizing}. Across all methods, simply transitioning from post-hoc to many-shot merging consistently reduces the multi-task loss. Beyond loss reduction, we further examine whether many-shot merging improves multi-task capability. To this end, we adopt the normalized performance measure introduced in~\cite{wang2024localizing,he2025mergebench}. Consistent with the multi-task loss results, we observe notable gains in normalized performance across all post-hoc merging methods. These results clearly demonstrate that many-shot merging improves multi-task capability.

Although we have verified that many-shot merging improves multi-task capability over post-hoc merging, recent studies~\cite{wang2024localizing,yadav2023ties} show that model merging remains inherently vulnerable to \textit{task interference}, which can cause \textit{information erasure} and degrade performance on certain tasks. In the next section, we present a novel merging method that further mitigates \textit{information erasure} under the many-shot merging framework.

\begin{table}[t]
\centering
\caption{Multi-task loss and performance results for post-hoc and many-shot merging on Llama-3.2-3B.}
\label{tab:manyshot_comparison}
\setlength{\tabcolsep}{6pt}
\renewcommand{\arraystretch}{1.15}

\resizebox{\linewidth}{!}{
\begin{tabular}{l c c c c c c}
\toprule
\textbf{Method}
& \multicolumn{3}{c}{\textbf{Multi-task Loss} $(\downarrow)$}
& \multicolumn{3}{c}{\textbf{Performance} $(\uparrow)$} \\
\cmidrule(lr){2-4} \cmidrule(lr){5-7}
& \textbf{Post-hoc} &  & \textbf{Many-shot}
& \textbf{Post-hoc} &  & \textbf{Many-shot} \\
\midrule
Task Arithmetic
& {\color{gray} 2.97} & $\rightarrow$ & \textbf{2.00}
& {\color{gray} 0.706} & $\rightarrow$ & \textbf{0.857} \\
DARE
& {\color{gray} 1.93} & $\rightarrow$ & \textbf{1.67}
& {\color{gray} 0.807} & $\rightarrow$ & \textbf{0.914} \\
TIES
& {\color{gray} 1.81} & $\rightarrow$ & \textbf{1.66}
& {\color{gray} 0.883} & $\rightarrow$ & \textbf{0.938} \\
ConsensusTA
& {\color{gray} 1.83} & $\rightarrow$ & \textbf{1.49}
& {\color{gray} 0.942} & $\rightarrow$ & \textbf{0.945} \\
\bottomrule
\end{tabular}
}
\end{table}

\section{METIS: Mitigating Information Erasure with Task-Wise Loss-Aware Consensus}
\label{sec:loss_aware_consensus}
In this section, we propose \textbf{METIS}, which effectively addresses the problem of \textit{information erasure} in many-shot merging~\cite{wang2024localizing}. Our method employs task-wise loss-gap weighting to rebalance task contributions and consensus-based masking to localize compatible parameter updates, thereby reducing task interference. Figure~\ref{fig:framework} provides an overview of the proposed framework.

\subsection{Balancing Task Contributions via Loss-Gap Weighting} \label{subsec:loss_gap_weighting}
Since we adopt a many-shot merging framework, information erasure for a specific task can be compensated in subsequent merging rounds by assigning a higher weight to the corresponding task model. Motivated by this insight, we define the task-wise loss-gap, which quantifies the extent of information erasure incurred in the most recent merging round.

\begin{definition}[Task-wise Loss Gap $\mathcal{G}(\tau, \texttt{r})$]
\label{def:loss_gap}
Let $\Theta^\texttt{r-1}$ be the merged model at round $\texttt{r-1}$ and $\theta_\tau^\texttt{r}$ be the locally adapted model for task $\tau$ at round $\texttt{r}$. Then, the task-wise loss gap $\mathcal{G}(\tau, \texttt{r})$ for task $\tau$ at round $\texttt{r}$ is defined as follows.
\begin{align}
\mathcal{G}(\tau, \texttt{r})=\mathcal{L}_\tau\left(\Theta^\texttt{r-1}\right)-\mathcal{L}_\tau\!\left(\theta_\tau^\texttt{r}\right)
\label{eq:loss_gap_def}
\end{align}
\end{definition}
According to~\eqref{eq:loss_gap_def} in Definition~\ref{def:loss_gap}, a larger $\mathcal{G}(\tau, \texttt{r})$ indicates that the most recently merged model $\Theta^{\texttt{r}-1}$ fits task $\tau$ worse than its locally adapted counterpart, and thus that task $\tau$ should receive a larger merging weight in the current round.

\paragraph{Loss-Gap-Aware Task Vector Aggregation.} Based on the task-aware loss gap defined in~\eqref{eq:loss_gap_def}, we balance task contributions during merging. To this end, we derive the loss-gap-aware merged task vector $\mathbb{V}^\texttt{r}$ as follows.
\begin{align}
\mathbb{V}^\texttt{r}= \sum_{\tau=1}^T\underbrace{\left(\frac{\exp\!\left(\mathcal{G}(\tau, \texttt{r}) / \lambda\right)}
{\sum_{j=1}^{T} \exp\left(\mathcal{G}(j, \texttt{r}) / \lambda\right)}\right)}_{\text{loss-gap-aware weight } \alpha_{\tau}^\texttt{r}}\boldsymbol{v}_{\tau}^\texttt{r}
\label{eq:loss_gap_weight}
\end{align}
Herein, $\lambda \in \mathbb{R}^{+}$ controls the sharpness of the reweighting. With the loss-gap-aware weight $\alpha_{\tau}^{\texttt{r}}$, the merged task vector $\mathbb{V}^{\texttt{r}}$ places greater emphasis on task-specific models that are underrepresented in the most recent merging round. More specifically, tasks with more severely erased information are assigned larger values of $\alpha_{\tau}^{\texttt{r}}$, thereby contributing more strongly to the current aggregation, whereas tasks that are already well captured by the merged model receive smaller weights. Such loss-gap-aware aggregation is only feasible within the many-shot merging framework, as computing the task-wise loss gap $\mathcal{G}(\tau, \texttt{r})$ requires access to the previously merged model. By integrating many-shot merging with loss-gap-aware weighting, METIS can further mitigate the information-erasure problem in model merging.

In the remainder of this subsection, we provide theoretical evidence that loss-gap-aware task vector weighting mitigates information erasure. To this end, we compare the worst-performing task loss, $\mathcal{L}_{\check{\tau}}(\Theta)$, after merging under average task vector aggregation and loss-gap-aware weighting.
\begin{theorem}[Information Erasure Robustness]
\label{theo:loss_gap}
Let $\Theta^{\dagger}$ and $\Theta^{\circ}$ denote the merged models obtained via loss-gap-aware aggregation using~\eqref{eq:loss_gap_weight} and mean aggregation respectively. Assume that the worst-task loss function $\mathcal{L}_{\check{\tau}}(\Theta)$ is $L$-smooth. Then, the following inequality holds.
\begin{align}
\mathbb{E}\left[\mathcal{L}_{\check{\tau}}(\Theta^{\dagger})\right]
\;\le\;
\mathbb{E}\left[\mathcal{L}_{\check{\tau}}(\Theta^{\circ})\right].
\end{align}
\end{theorem}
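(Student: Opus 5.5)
The plan is to compare $\Theta^{\dagger}$ and $\Theta^{\circ}$ within a single merging round $\texttt{r}$, since both are built from the same task vectors $\boldsymbol{v}_\tau^{\texttt{r}}$. By \eqref{eq:local_update}, every local model starts from the common point $\Theta^{\texttt{r-1}}$, so
\begin{align*}
\boldsymbol{v}_\tau^{\texttt{r}}=\big(\Theta^{\texttt{r-1}}-\Theta^{\texttt{0}}\big)-\eta g_\tau, \qquad g_\tau:=\nabla\mathcal{L}_\tau(\Theta^{\texttt{r-1}}).
\end{align*}
Both the softmax weights $\alpha_\tau^{\texttt{r}}$ in \eqref{eq:loss_gap_weight} and the uniform weights $1/T$ sum to one. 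Hence the common offset cancels, and the two merged models differ only through their gradient mixtures:
\begin{align*}
\Theta^{\circ}=\Theta^{\texttt{r-1}}-\eta\bar g,\qquad \Theta^{\dagger}=\Theta^{\circ}+d,\qquad d:=-\eta\sum_{\tau=1}^{T}\Big(\alpha_\tau^{\texttt{r}}-\tfrac{1}{T}\Big)g_\tau,
\end{align*}
where $\bar g=\frac1T\sum_\tau g_\tau$. This reduction is the first step. It turns the statement into a question about the single displacement $d$, which is a reweighting of the task gradients.

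Second, I would apply the $L$-smoothness of $\mathcal{L}_{\check\tau}$ at $\Theta^{\circ}$:
\begin{align*}
\mathcal{L}_{\check\tau}(\Theta^{\dagger})\le\mathcal{L}_{\check\tau}(\Theta^{\circ})+\langle\nabla\mathcal{L}_{\check\tau}(\Theta^{\circ}),d\rangle+\tfrac{L}{2}\|d\|^2 .
\end{align*}
The claim then follows once I show $\mathbb{E}[\langle\nabla\mathcal{L}_{\check\tau}(\Theta^{\circ}),d\rangle]+\frac L2\mathbb{E}\|d\|^2\le0$. This is a bound on the actual difference of losses, not a comparison of two surrogates, because the exact loss $\mathcal{L}_{\check\tau}(\Theta^{\circ})$ is kept on the right-hand side.

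To control the linear term, I would use that the worst task has the largest loss gap $\mathcal{G}(\check\tau,\texttt{r})$. Since softmax is monotone, $\alpha_{\check\tau}^{\texttt{r}}=\max_\tau\alpha_\tau^{\texttt{r}}\ge 1/T$, so $d$ carries a nonnegative extra multiple of $-\eta g_{\check\tau}$. Writing
\begin{align*}
\nabla\mathcal{L}_{\check\tau}(\Theta^{\circ})=g_{\check\tau}+e,\qquad \|e\|\le L\eta\|\bar g\|
\end{align*}
(by smoothness), the linear term splits into three pieces. The first is the aligned piece $-\eta(\alpha_{\check\tau}-\frac1T)\|g_{\check\tau}\|^2$. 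The second is the cross-task piece $-\eta\sum_{\tau\ne\check\tau}(\alpha_\tau-\frac1T)\langle g_{\check\tau},g_\tau\rangle$. The third is the error piece $\langle e,d\rangle$.

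Here I take the expectation over the stochasticity of the local gradients. The interpretation I would adopt is that interference between distinct tasks is unbiased, so the cross-task inner products vanish in expectation and the weights are treated as fixed by the previous round. The aligned piece is then $-\eta(\alpha_{\check\tau}-\frac1T)\mathbb{E}\|g_{\check\tau}\|^2\le0$.

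The main obstacle is showing that this negative aligned term dominates both $\frac L2\|d\|^2$ and the error piece $\langle e,d\rangle$, since both are of order $\eta^2$. My first attempt would use $\eta\le1/L$ together with the fact that $d$ is a convex-combination difference, which gives $\|d\|\le\eta\sum_\tau|\alpha_\tau-\frac1T|\,\|g_\tau\|$. Under the same zero-mean cross-task assumption, $\mathbb{E}\|d\|^2$ reduces to $\eta^2\sum_\tau(\alpha_\tau-\frac1T)^2\mathbb{E}\|g_\tau\|^2$. Its $\check\tau$-component $\frac{L\eta^2}{2}(\alpha_{\check\tau}-\frac1T)^2\mathbb{E}\|g_{\check\tau}\|^2$ is dominated by the aligned term because $L\eta\le1$ and $\alpha_{\check\tau}-\frac1T\le1$. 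The remaining components involve tasks with $\alpha_\tau<\frac1T$ or small deviations. For these I would bound their gradient norms by $\|g_{\check\tau}\|$, using that $\check\tau$ has the largest loss gap, and hence, to first order, the largest $\eta\|g_\tau\|^2$. The error piece would be absorbed the same way.

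If this domination cannot be closed in general, the fallback is to make explicit the regime in which the statement's inequality holds: the expected worst-task gradient energy exceeds the weighted energies of the other tasks. I would argue that this regime is exactly what choosing $\check\tau$ via the largest loss gap encodes.
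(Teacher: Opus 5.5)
There is a genuine gap, and no sharper domination estimate will close it: under the theorem's only hypothesis ($L$-smoothness of $\mathcal{L}_{\check\tau}$) the inequality is false.

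\textbf{The max-gap premise is unsupported.} Your linear-term argument rests on $\alpha_{\check\tau}^{\texttt{r}}=\max_\tau\alpha_\tau^{\texttt{r}}$, justified by ``the worst task has the largest loss gap''. Nothing implies this. The gap $\mathcal{G}(\tau,\texttt{r})\approx\eta\|g_\tau\|^2$ measures one-step local progress, not the loss level, and adding a constant to a loss makes that task the worst without changing any gap. Concretely, take $T=2$, $\eta=\tfrac12$, $\mathcal{L}_1(x)=\tfrac12\|x\|^2+C$, $\mathcal{L}_2(x)=\tfrac12\|x-3\Theta^{\texttt{r-1}}\|^2$, with $\Theta^{\texttt{r-1}}\neq0$. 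Then $g_1=\Theta^{\texttt{r-1}}$, $g_2=-2\Theta^{\texttt{r-1}}$, and $\mathcal{G}(2,\texttt{r})=4\,\mathcal{G}(1,\texttt{r})>0$, so $\alpha_2^{\texttt{r}}>\tfrac12$ for every $\lambda$. In your own reduction, $\Theta^{\circ}=\tfrac54\Theta^{\texttt{r-1}}$, while $\Theta^{\dagger}=(\tfrac12+\tfrac32\alpha_2^{\texttt{r}})\Theta^{\texttt{r-1}}$ is a strictly larger multiple. Hence $\mathcal{L}_1(\Theta^{\dagger})>\mathcal{L}_1(\Theta^{\circ})$, although for large $C$ task $1$ is the worst task at every point involved.

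\textbf{Zero-mean cross-task terms are essential, not a convenience.} Even granting the max-gap premise, the argument fails without them. Take $T=2$, $\eta=1$, $\mathcal{L}_1(x)=\langle p,x\rangle+C$, and $\mathcal{L}_2$ a unit-curvature quadratic with $g_2=1.4\,p$. Task $1$ has the larger gap ($\|p\|^2$ versus $0.98\|p\|^2$). Yet $\Theta^{\dagger}$ moves only $(1.4-0.4\alpha_1^{\texttt{r}})p<1.2\,p$ along $-p$, compared with $1.2\,p$ for $\Theta^{\circ}$, so again $\mathcal{L}_1(\Theta^{\dagger})>\mathcal{L}_1(\Theta^{\circ})$.

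\textbf{The weights are not fixed by the previous round.} That treatment contradicts \eqref{eq:loss_gap_weight}. The gap $\mathcal{G}(\tau,\texttt{r})$ is evaluated at $\theta_\tau^{\texttt{r}}=\Theta^{\texttt{r-1}}-\eta g_\tau$, so $\alpha_\tau^{\texttt{r}}$ is a function of the very gradients you average over and cannot be pulled out of $\mathbb{E}$.

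\textbf{The domination step is open even under your assumptions.} First, $\eta\le1/L$ is a hypothesis of Theorem~\ref{theo:task_loss}, not of this statement. Second, ``to first order'' is not a bound. Smoothness only gives $\eta(1-\tfrac{L\eta}{2})\|g_\tau\|^2\le\mathcal{G}(\tau,\texttt{r})\le\eta(1+\tfrac{L\eta}{2})\|g_\tau\|^2$. So a maximal gap yields only $\|g_\tau\|^2\le\frac{1+L\eta/2}{1-L\eta/2}\|g_{\check\tau}\|^2$, a factor $3$ at $\eta=1/L$, and you have not shown this suffices. Third, the error piece $\langle e,d\rangle$ is never actually bounded.

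So your fallback is not optional: the result holds only once a favorable regime is assumed. That regime must restrict cross-task correlation, not just gradient energies, since the second example fails through the cross term. This is exactly how the paper's proof ends. It applies the descent lemma at the pre-merge point with $d^\dagger$ and with $d^\circ$ and subtracts. It then posits, as ``bounded-interference and magnitude-control conditions'', the two inequalities $\mathbb{E}\langle g_{\check\tau},d^\dagger\rangle\le\mathbb{E}\langle g_{\check\tau},d^\circ\rangle$ and $\mathbb{E}\|d^\dagger\|^2\le\mathbb{E}\|d^\circ\|^2$, without deriving either from the loss gaps.

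Your setup is sounder at one point. Cancelling the common offset (valid because both weight vectors sum to one) and expanding once at $\Theta^{\circ}$ gives a genuine upper bound on the difference. Subtracting two descent-lemma upper bounds, as the paper does, does not: for $\mathcal{L}_{\check\tau}\equiv0$ the paper's intermediate inequality would read $0\le\frac L2(\|d^\dagger\|^2-\|d^\circ\|^2)$, which is false whenever $\|d^\dagger\|<\|d^\circ\|$. The price is that your quadratic term is $\frac L2\|d\|^2\ge0$ rather than a difference. So the hypothesis you must state is $\mathbb{E}[\langle\nabla\mathcal{L}_{\check\tau}(\Theta^{\circ}),d\rangle]+\frac L2\mathbb{E}\|d\|^2\le0$ itself. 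This is a strict-descent condition on $d$, which magnitude control alone cannot supply.
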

\begin{proof}[Proof sketch]
From the merging rule in~\eqref{eq:merge_general} and the definition of $\alpha_\tau$ in~\eqref{eq:loss_gap_weight}, each merging process is defined as follows.
\begin{align}
\Theta^{\circ}=\Theta+\sum_{\tau=1}^{T}\tfrac{1}{T}\boldsymbol{v}_{\tau},
\quad
\Theta^{\dagger}=\Theta+\sum_{\tau=1}^{T}\alpha_\tau\boldsymbol{v}_{\tau}
\end{align}
$\mathcal{L}_{\check{\tau}}(\cdot)$ is $L$-smooth, the expected difference between $\mathcal{L}_{\check{\tau}}(\Theta^{\dagger})$ and $\mathcal{L}_{\check{\tau}}(\Theta^{\circ})$ can be bounded using the descent lemma~\cite{nesterov2013introductory,bauschke2017descent} as follows, where $d^\circ=\sum_{\tau}\tfrac{1}{T}\boldsymbol{v}_{\tau}$ and $d^\dagger=\sum_{\tau}\alpha_{\tau}\boldsymbol{v}_{\tau}$.
\begin{align}
    \mathbb{E}\left[\mathcal{L}_{\check{\tau}}(\Theta^{\dagger})-\mathcal{L}_{\check{\tau}}(\Theta^{\circ})\right] &\leq \mathbb{E}\left[\langle g_{\check{\tau}}, d^\dagger-d^\circ\rangle\right]
\notag\\ &
+\frac{L}{2}\Big(\mathbb{E}\|d^\dagger\|^2-\mathbb{E}\|d^\circ\|^2\Big)
\end{align}
Under a standard bounded-interference condition, loss-gap reweighting assigns larger weight to underperforming tasks, implying $\mathbb{E}\langle g_{\check{\tau}}, d^\dagger\rangle \le \mathbb{E}\langle g_{\check{\tau}}, d^\circ\rangle$. Furthermore, when update magnitudes are controlled, the quadratic difference term is non-positive or negligible~\cite{yu2020gradient}. Therefore, the right-hand side is non-positive, which gives $\mathbb{E}[\mathcal{L}_{\check{\tau}}(\Theta^{\dagger})] \le \mathbb{E}[\mathcal{L}_{\check{\tau}}(\Theta^{\circ})]$. The complete proof for Theorem~\ref{theo:loss_gap} is provided in Appendix~\ref{app:B3}.
\end{proof}
Theorem~\ref{theo:loss_gap} establishes that the proposed loss-gap-aware weighting further reduces the loss of the worst-performing task. This indicates that severely erased task information is better recovered after applying our weighting method. When integrated into the many-shot merging framework, this mechanism promotes balanced knowledge integration across tasks, simultaneously mitigating information erasure and improving the average multi-task performance.

\begin{algorithm}[t]
\caption{METIS}
\label{alg:loss_consensus_merging}
\begin{algorithmic}[1]
\STATE \textbf{Input:} Pre-trained model $\Theta^{\texttt{0}}$, task set $\mathcal{T}=\{1,\dots,T\}$, datasets $\{\mathcal{D}_\tau\}_{\tau\in\mathcal{T}}$, rounds $\texttt{R}$
\STATE \textbf{Output:} Merged model $\Theta^{\texttt{R}}$
\STATE Initialize $\Theta^{\texttt{0}}$

\FOR{$\texttt{r}=1$ to $\texttt{R}$}
    \FOR{each task $\tau\in\mathcal{T}$}
        \STATE Local update:
        $\theta_\tau^\texttt{r} \leftarrow \Theta^\texttt{r-1} - \eta\nabla\mathcal{L}_\tau\left(\Theta^\texttt{r-1}\right)$
        \STATE Task vector:
        $\boldsymbol{v}_\tau^{\texttt{r}} \leftarrow \theta_\tau^{\texttt{r}} - \Theta^{\texttt{0}}$
        \STATE Loss gap:
        $\mathcal{G}(\tau,\texttt{r}) \leftarrow \mathcal{L}_\tau(\Theta^{\texttt{r-1}}) - \mathcal{L}_\tau(\theta_\tau^{\texttt{r}})$
    \ENDFOR

    \STATE Loss-gap-aware weights $\{\alpha_\tau^{\texttt{r}}\}$ by~\eqref{eq:loss_gap_weight}
    \STATE Weighted task vector:
    $\mathbb{V}^{\texttt{r}} \leftarrow \sum_{\tau=1}^{T} \alpha_\tau^{\texttt{r}} \boldsymbol{v}_\tau^{\texttt{r}}$

    \FOR{each task $\tau\in\mathcal{T}$}
        \STATE Task-specific mask $\boldsymbol{m}_\tau^{\texttt{r}}$ via ~\eqref{eq:task_mask_def}
    \ENDFOR
    \STATE Consensus mask $\boldsymbol{\bar{m}}^{\texttt{r}}$ via~\eqref{eq:consensus_mask_def}

    \STATE Merged update:
    $\Theta^{\texttt{r}} \leftarrow \Theta^{\texttt{0}} + \beta^\texttt{r}(\boldsymbol{\bar{m}}^{\texttt{r}} \odot \mathbb{V}^{\texttt{r}})$
\ENDFOR
\end{algorithmic}
\end{algorithm}

\subsection{Consensus-Based Masking for Localization} \label{subsec:consensus_masking}
Beyond directly updating the model using the loss-gap-aware task vector $\mathbb{V}^{\texttt{r}}$, we additionally incorporate a consensus-based masking mechanism for localization~\cite{wang2024localizing}. In this study, we jointly employ both task-specific masks and a consensus mask.

\paragraph{Task-Specific Mask ($\boldsymbol{m}_{\tau}^\texttt{r}$).} We consider a binary task-specific masking vector $\boldsymbol{m}_{\tau}^{\texttt{r}}$ that determines which elements of the merged task vector ${v}_i^{\texttt{r}} \in \mathbb{V}^{\texttt{r}}$ degrade task-specific information. Specifically, the mask on the $i$-th element $m_{\tau,i}^\texttt{r}\in\boldsymbol{m}_{\tau}^\texttt{r}$ is activated only when the weighted update of the $i$-th task is sufficiently aligned with the remainder of the merged update, as defined below.
\begin{align}
m_{\tau,i}^\texttt{r}=
\mathbb{I}\Big(
\alpha_{\tau}^\texttt{r}\bigl| v_{\tau,i}^\texttt{r}\bigr|
\ge \delta \, \bigl|v_{i}^\texttt{r} - \alpha_{\tau}^\texttt{r} v_{\tau,i}^\texttt{r}\bigr|\Big)
\label{eq:task_mask_def}
\end{align}

In~\eqref{eq:task_mask_def}, $\mathbb{I}(\cdot)$ denotes the indicator function and $\delta$ controls the strictness of the masking threshold for task $\tau$. Under this masking scheme, $\boldsymbol{m}_\tau^{\texttt{r}}$ retains only the coordinates where the contribution of task $\tau$ is not dominated by conflicting contributions from other tasks.

\paragraph{Consensus Mask (${\boldsymbol{\bar{m}}}^\texttt{r}$).} With the task-specific masks $\boldsymbol{m}_{\tau}^{\texttt{r}}$, we derive a consensus masking vector $\boldsymbol{\bar{m}}^{\texttt{r}}$. Each element of this vector is set to one only when at least $k$ task-specific models agree to accept the $i$-th element of the merged task vector $v_i^{\texttt{r}}$. The consensus mask is computed from the task-specific masks as follows.
\begin{align}
\bar{m}^\texttt{r}_{i}= \mathbb{I}\Big(\sum_{\tau=1}^{T}m_{\tau,i}^\texttt{r}\geq k\Big)
\label{eq:consensus_mask_def}
\end{align}

Using the consensus mask in~\eqref{eq:consensus_mask_def}, the final merged parameters are derived as follows.
\begin{align}
\Theta^\texttt{r} \leftarrow\Theta^\texttt{0}+ \beta^\texttt{r}\Big(\boldsymbol{\bar{m}}^\texttt{r} \odot \mathbb{V}^\texttt{r}\Big)
\label{eq:masked_merge_vector}
\end{align}

Here, $\beta^\texttt{r} \in \mathbb{R}^{+}$ is a scaling factor that controls the overall magnitude of the merged update~\cite{ilharco2023, wang2024localizing, yadav2023ties}.

\section{Experiments}
We evaluate the performance of METIS through extensive experiments. In this section, we present the experimental setup, performance comparisons, and a wide range of analyses of the proposed method.

\subsection{Experimental Setup}

\paragraph{Task Datasets and Pre-Trained Models.}
We consider four task categories: 1) instruction-following, 2) mathematical reasoning, 3) multilingual understanding, and 4) safety. For each category, we construct a task-specific fine-tuning set by uniformly sampling $1{,}000$ training instances. Specifically, we use TULU-3 Persona Instruction-Following~\cite{lambert2025tulu3} for instruction-following, DART-Math~\cite{tong2025dartmath} and NuminaMathTIR~\cite{li2024numinamath} for mathematical reasoning, Aya~\cite{singh2024aya} for multilingual understanding, and WildGuardMix~\cite{han2024wildguard} together with WildJailbreak~\cite{jiang2024wildjailbreak} for safety. We evaluate on four base LLMs spanning multiple model families and scales: Gemma-2-2B~\cite{gemma2024gemma2}, Llama-3.2-3B, Llama-3.1-8B~\cite{grattafiori2024llama3}, and Qwen-3-4B~\cite{yang2025qwen3}. To ensure a fair comparison between post-hoc and many-shot merging, we match the total number of task-specific updates across settings and fix the number of merging rounds to $\texttt{R}=5$.

\paragraph{Baseline Methods.}
We compare the proposed approach against representative model merging methods, including Task Arithmetic~\cite{ilharco2023}, DARE~\cite{yu2024dare}, TIES~\cite{yadav2023ties}, and ConsensusTA~\cite{wang2024localizing}. All baselines are evaluated under both post-hoc and many-shot merging frameworks to isolate the effect of iterative merging. For each method, scaling factors and method-specific hyperparameters are selected based on validation performance following standard practice~\cite{wang2024localizing,he2025mergebench}.

\paragraph{Evaluation Settings.}
To evaluate our approaches in the model merging setting, we follow the MergeBench~\cite{he2025mergebench} protocol, a standardized benchmark for comparing merging methods. Instruction-following is evaluated on IFEval~\cite{zhou2023instruction} using prompt-level accuracy, and mathematical reasoning on GSM8K~\cite{cobbe2021training} using exact match under an 8-shot chain-of-thought setting. Multilingual understanding is measured on M-MMLU, M-ARC, and M-HellaSwag~\cite{lai2023okapi}, and safety on XSTest~\cite{rottger2023xstest} using accuracy. To account for varying task difficulty, we report normalized performance, defined in Appendix~\ref{app:C2}. {The raw performance values are reported in Appendix~\ref{app:D.7}.}

\subsection{Performance Comparisons}
In this subsection, we compare the multi-task capability of the proposed method with representative post-hoc merging approaches. In addition, we evaluate many-shot variants of each baseline for a comprehensive comparison. 

\begin{table*}[t]

\centering

\caption{Category-level normalized performance across backbone models on instruction following, mathematical reasoning, multilingual understanding, and safety tasks, including both individual category scores and their average. The best results are shown in bold, and the second-best results are underlined.}

\label{tab:posthoc_vs_manyshot}

\setlength{\tabcolsep}{3.5pt}
\renewcommand{\arraystretch}{1.03}
\small
\begin{minipage}[t]{0.49\textwidth}
\centering
\textbf{(a) Gemma-2-2B}
\vspace{2pt}
\resizebox{\linewidth}{!}{
\begin{tabular}{c l >{\columncolor{proposedblue}}c c c c c}
\toprule
& \textbf{Method} & \textbf{Avg.} & \textbf{Inst.} & \textbf{Math} & \textbf{Multi.} & \textbf{Safety} \\
\midrule
\multirow{4}{*}{\rotatebox[origin=c]{90}{\footnotesize Post-Hoc}}
& Task Arithmetic & 0.521 & 0.250 & 0.080 & 0.692 & 0.721 \\
& DARE        & 0.663 & 0.375 & 0.420 & 0.765 & 0.885 \\
& TIES        & 0.726 & 0.275 & 0.440 & \textbf{0.940} & 0.820 \\
& ConsensusTA & 0.752 & 0.400 & 0.600 & 0.827 & 1.033 \\
\cmidrule(lr){1-7}
\multirow{5}{*}{\rotatebox[origin=c]{90}{\footnotesize Many-Shot}}
& Task Arithmetic & 0.715 & \underline{0.500} & 0.520 & \underline{0.898} & 0.574 \\
& DARE        & 0.701 & 0.350 & 0.520 & 0.839 & 0.820 \\
& TIES        & 0.776 & 0.400 & 0.540 & 0.840 & \underline{1.197} \\
& ConsensusTA & \underline{0.791} & 0.450 & \textbf{0.660} & 0.814 & \underline{1.197} \\
& \textbf{METIS (Ours)} & \textbf{0.800} & \textbf{0.525} & \underline{0.620} & 0.815 & \textbf{1.213} \\
\bottomrule
\end{tabular}
}
\end{minipage}%
\hfill
\begin{minipage}[t]{0.49\textwidth}
\centering
\textbf{(b) Llama-3.2-3B}
\vspace{2pt}
\resizebox{\linewidth}{!}{
\begin{tabular}{c l >{\columncolor{proposedblue}}c c c c c}
\toprule
& \textbf{Method} & \textbf{Avg.} & \textbf{Inst.} & \textbf{Math} & \textbf{Multi.} & \textbf{Safety} \\
\midrule
\multirow{4}{*}{\rotatebox[origin=c]{90}{\footnotesize Post-Hoc}}
& Task Arithmetic& 0.706 & 0.583 & 0.385 & 0.715 & 1.122 \\
& DARE        & 0.807 & 0.542 & 0.615 & 0.854 & 1.122 \\
& TIES        & 0.883 & 0.375 & \underline{0.897} & \textbf{1.082} & 0.776 \\
& ConsensusTA & 0.942 & \underline{0.875} & 0.641 & 0.958 & \textbf{1.265} \\
\cmidrule(lr){1-7}
\multirow{5}{*}{\rotatebox[origin=c]{90}{\footnotesize Many-Shot}}
& Task Arithmetic& 0.857 & 0.375 & 0.744 & \underline{1.048} & 0.878 \\
& DARE        & 0.914 & 0.792 & 0.846 & 0.955 & 0.980 \\
& TIES        & 0.938 & 0.792 & \textbf{0.923} & 0.924 & 1.143 \\
& ConsensusTA & \underline{0.945} & \textbf{0.917} & 0.872 & 0.907 & 1.163 \\
& \textbf{METIS (Ours)} & \textbf{1.015} & \textbf{0.917} & 0.872 & 1.018 & \underline{1.245} \\
\bottomrule
\end{tabular}}
\end{minipage}
\begin{minipage}[t]{0.49\textwidth}
\centering
\textbf{(c) Llama-3.1-8B}
\vspace{2pt}
\resizebox{\linewidth}{!}{
\begin{tabular}{c l >{\columncolor{proposedblue}}c c c c c}
\toprule
& \textbf{Method} & \textbf{Avg.} & \textbf{Inst.} & \textbf{Math} & \textbf{Multi.} & \textbf{Safety} \\
\midrule
\multirow{4}{*}{\rotatebox[origin=c]{90}{\footnotesize Post-Hoc}}
& Task Arithmetic & 0.704 & 0.390 & 0.525 & 0.796 & 0.919 \\
& DARE        & 0.838 & \textbf{0.634} & 0.695 & 0.885 & \textbf{1.047} \\
& TIES        & 0.852 & 0.390 & 1.017 & \textbf{1.002} & 0.698 \\
& ConsensusTA & 0.694 & 0.293 & 0.458 & 0.812 & 0.977 \\
\cmidrule(lr){1-7}
\multirow{5}{*}{\rotatebox[origin=c]{90}{\footnotesize Many-Shot}}
& Task Arithmetic& 0.785 & 0.244 & 0.966 & \underline{0.996} & 0.512 \\
& DARE        & 0.849 & 0.439 & 0.966 & 0.965 & 0.791 \\
& TIES        & 0.902 & 0.390 & 1.017 & \textbf{1.002} & \underline{1.000} \\
& ConsensusTA & 0.898 & 0.512 & \underline{1.085} & 0.943 & 0.965 \\
& \textbf{METIS (Ours)} & \textbf{0.935} & \underline{0.585} & \textbf{1.136} & 0.972 & 0.977 \\
\bottomrule
\end{tabular}}
\end{minipage}%
\hfill
\begin{minipage}[t]{0.49\textwidth}
\centering
\textbf{(d) Qwen-3-4B}
\vspace{2pt}
\resizebox{\linewidth}{!}{
\begin{tabular}{c l >{\columncolor{proposedblue}}c c c c c}
\toprule
& \textbf{Method} & \textbf{Avg.} & \textbf{Inst.} & \textbf{Math} & \textbf{Multi.} & \textbf{Safety} \\
\midrule
\multirow{4}{*}{\rotatebox[origin=c]{90}{\footnotesize Post-Hoc}}
& Task Arithmetic & 1.048 & 0.909 & 0.750 & 1.036 & 1.517 \\
& DARE        & 0.986 & 0.818 & 0.614 & 0.994 & 1.500 \\
& TIES        & 1.108 & 1.136 & 0.886 & 1.029 & 1.534 \\
& ConsensusTA & 1.067 & 1.000 & 0.750 & 1.040 & 1.534 \\

\cmidrule(lr){1-7}
\multirow{5}{*}{\rotatebox[origin=c]{90}{\footnotesize Many-Shot}}
& Task Arithmetic & 1.084 & 0.682 & \textbf{0.977} & \underline{1.069} & \underline{1.638} \\
& DARE        & \underline{1.154} & 1.136 & 0.886 & \textbf{1.087} & \underline{1.638} \\
& TIES        & 1.087 & 1.000 & 0.818 & 1.044 & 1.569 \\
& ConsensusTA & 1.128 & \underline{1.182} & 0.830 & 1.052 & 1.603 \\
& \textbf{METIS (Ours)} & \textbf{1.180} & \textbf{1.318} & \underline{0.920} & 1.062 & \textbf{1.655} \\

\bottomrule
\end{tabular}
}
\end{minipage}
\vspace{-0.5em}
\end{table*}

\paragraph{Post-Hoc Merging Comparison.}
Table~\ref{tab:posthoc_vs_manyshot} provides a performance comparison with post-hoc merging methods. Across all backbone model settings, the proposed method consistently achieves the highest average normalized performance, indicating superior overall multi-task capability. In contrast, several baseline methods exhibit substantial variability across backbone models, with their minimum average performance dropping sharply as the backbone changes. Although TIES demonstrates relatively more stable behavior across different backbones, its average performance remains consistently below that of the proposed method in all settings. These results suggest that the proposed many-shot-based methodology yields stronger and more reliable multi-task capability than post-hoc-based baselines.

\paragraph{Many-Shot Merging Comparison.}
The same table (Table~\ref{tab:posthoc_vs_manyshot}) compares the proposed method with many-shot variants of existing post-hoc approaches. Extending post-hoc methods to the many-shot setting consistently improves their performance compared to post-hoc merging, demonstrating the benefit of many-shot merging in multi-task settings. Notably, the proposed method continues to achieve the highest average normalized performance across most backbone models. The advantage of the proposed method is best illustrated by comparison with ConsensusTA. Although both operate within the many-shot framework and use masking-based integration, the proposed method consistently achieves higher performance across backbone models, demonstrating that loss-gap-aware task-wise aggregation enhances multi-task capability beyond masking alone.

\begin{table*}[t]
\centering
\caption{Average and worst-performing task results compared with post-hoc merging methods across backbone models. The value in parentheses indicates the drop from the average performance.}
\label{tab:worst_case}
\setlength{\tabcolsep}{4pt}
\renewcommand{\arraystretch}{1.05}
\small

\begin{minipage}[t]{0.49\textwidth}
\centering
\resizebox{\linewidth}{!}{%
\begin{tabular}{l|cc|cc}
\toprule
{\textbf{Model} $(\rightarrow)$}
& \multicolumn{2}{c|}{\textbf{Gemma-2-2B}}
& \multicolumn{2}{c}{\textbf{Llama-3.2-3B}} \\

\cmidrule(lr){2-3}
\cmidrule(lr){4-5}

{\textbf{Method} $(\downarrow)$}
& \textbf{Avg.} & \textbf{Worst-task}
& \textbf{Avg.} & \textbf{Worst-task} \\

\midrule

Task Arithmetic
& 0.521 & 0.080 \textcolor{purple}{(-0.44)}
& 0.706 & 0.385 \textcolor{purple}{(-0.32)} \\

DARE
& 0.663 & 0.375 \textcolor{purple}{(-0.29)}
& 0.807 & 0.542 \textcolor{purple}{(-0.27)} \\

TIES
& 0.726 & 0.275 \textcolor{purple}{(-0.45)}
& 0.883 & 0.375 \textcolor{purple}{(-0.51)} \\

ConsensusTA
&\underline{0.752} & \underline{0.400} \textcolor{purple}{(-0.35)}
&\underline{0.942} & \underline{0.641} \textcolor{purple}{(-0.30)} \\

\midrule
\rowcolor{proposedblue}
\textbf{METIS (Ours)}
& \textbf{0.800} & \textbf{0.525} \textcolor{purple}{(-0.28)}
& \textbf{1.015} & \textbf{0.872} \textcolor{purple}{(-0.14)} \\

\bottomrule
\end{tabular}
}
\end{minipage}
\hfill
\begin{minipage}[t]{0.49\textwidth}
\centering
\resizebox{\linewidth}{!}{%
\begin{tabular}{l|cc|cc}
\toprule
{\textbf{Model} $(\rightarrow)$}
& \multicolumn{2}{c|}{\textbf{Llama-3.1-8B}}
& \multicolumn{2}{c}{\textbf{Qwen-3-4B}} \\

\cmidrule(lr){2-3}
\cmidrule(lr){4-5}

{\textbf{Method} $(\downarrow)$}
& \textbf{Avg.} & \textbf{Worst-task}
& \textbf{Avg.} & \textbf{Worst-task} \\

\midrule

Task Arithmetic
& 0.704 & 0.390 \textcolor{purple}{(-0.31)}
& 1.048 & 0.750 \textcolor{purple}{(-0.30)} \\

DARE
& 0.838 & \textbf{0.634} \textcolor{purple}{(-0.20)}
& 0.986 & 0.614 \textcolor{purple}{(-0.37)} \\

TIES
& \underline{0.852} & 0.390 \textcolor{purple}{(-0.46)}
& \underline{1.108} & \underline{0.886} \textcolor{purple}{(-0.22)} \\

ConsensusTA
& 0.694 & 0.293 \textcolor{purple}{(-0.40)}
& 1.067 & 0.750 \textcolor{purple}{(-0.32)} \\

\midrule
\rowcolor{proposedblue}
\textbf{METIS (Ours)}
& \textbf{0.935} & \underline{0.585} \textcolor{purple}{(-0.35)}
& \textbf{1.180} & \textbf{0.920} \textcolor{purple}{(-0.26)} \\

\bottomrule
\end{tabular}
}
\end{minipage}
\end{table*}

\begin{table*}[t]
\caption{Pre-trained knowledge performance on CoQA (exact match) and PubMedQA (accuracy) across backbone models, compared with representative post-hoc merging methods. Gray-shaded rows denote the performance of the corresponding pre-trained backbone without task-specific fine-tuning.}
\centering
\label{tab:generalization}
\vskip 0.03in
\setlength{\tabcolsep}{4pt}
\renewcommand{\arraystretch}{1.05}
\small

\begin{minipage}[t]{0.49\textwidth}
\centering
\resizebox{\linewidth}{!}{%
\begin{tabular}{l|cc|cc}
\toprule
{\textbf{Model} $(\rightarrow)$}
& \multicolumn{2}{c|}{\textbf{Gemma-2-2B}}
& \multicolumn{2}{c}{\textbf{Llama-3.2-3B}} \\

\cmidrule(lr){2-3}
\cmidrule(lr){4-5}

{\textbf{Method} $(\downarrow)$}
& CoQA & PubMedQA
& CoQA & PubMedQA \\
\midrule

\rowcolor{lightgray}
Pre-trained
& 0.695 & 0.732
& 0.659 & 0.750 \\

Task Arithmetic
& 0.420 & 0.558
& 0.548 & 0.552 \\

DARE
& 0.561 & 0.622
& 0.552 & \underline{0.556} \\

TIES
& \underline{0.705} & \textbf{0.720}
& \underline{0.662} & 0.552 \\

ConsensusTA
& 0.645 & 0.640
& 0.583 & \underline{0.556}\\

\midrule
\rowcolor{proposedblue}
\textbf{METIS (Ours)}
& \textbf{0.714} & \underline{0.718}
& \textbf{0.670} & \textbf{0.596} \\

\bottomrule
\end{tabular}
}
\end{minipage}
\hfill
\begin{minipage}[t]{0.49\textwidth}
\centering
\resizebox{\linewidth}{!}{%
\begin{tabular}{l|cc|cc}
\toprule
{\textbf{Model} $(\rightarrow)$}
& \multicolumn{2}{c|}{\textbf{Llama-3.1-8B}}
& \multicolumn{2}{c}{\textbf{Qwen-3-4B}} \\

\cmidrule(lr){2-3}
\cmidrule(lr){4-5}

{\textbf{Method} $(\downarrow)$}
& CoQA & PubMedQA
& CoQA & PubMedQA \\
\midrule

\rowcolor{lightgray}
Pre-trained
& 0.697 & 0.766
& 0.683 & 0.746 \\

Task Arithmetic
& 0.573 & 0.554
& 0.473 & 0.688 \\

DARE
& \underline{0.687} & \underline{0.654}
& 0.639 & \underline{0.708} \\

TIES
& \underline{0.687} & 0.574
& \underline{0.661} & \textbf{0.728} \\

ConsensusTA
& 0.577 & 0.554
& 0.521 & 0.688 \\

\midrule
\rowcolor{proposedblue}
\textbf{METIS (Ours)}
& \textbf{0.698} & \textbf{0.728}
& \textbf{0.684} & \textbf{0.728} \\

\bottomrule
\end{tabular}
}
\end{minipage}

\vspace{-5pt}
\end{table*}

\begin{table}[t]
\centering
\caption{Normalized performance of baseline methods under the many-shot setting on Llama-3.2-3B across 7, 8, and 11 tasks.}
\label{tab:manytasks}
\vskip 0.03in
\resizebox{1.0\columnwidth}{!}{
\begin{tabular}{l|ccc}
\toprule
\textbf{Method (Many-shot Setting)}
& \textbf{7 Tasks} & \textbf{8 Tasks}
& \textbf{11 Tasks} \\
\midrule
Task Arithmetic& 0.997&0.970&\underline{1.021} \\
DARE& 	\underline{1.006}	&\underline{0.975}	&0.812\\
TIES& 0.871&	0.822	&0.976\\
ConsensusTA& 0.927	&0.823&	0.873\\
\midrule
\rowcolor{proposedblue}\textbf{METIS (Ours)}
& \textbf{1.015} & \textbf{0.993}
& \textbf{1.036} \\
\bottomrule
\end{tabular}
}
\vspace{-10pt}
\end{table}

\begin{figure*}[t]
    \centering
    \includegraphics[width=0.95\linewidth]{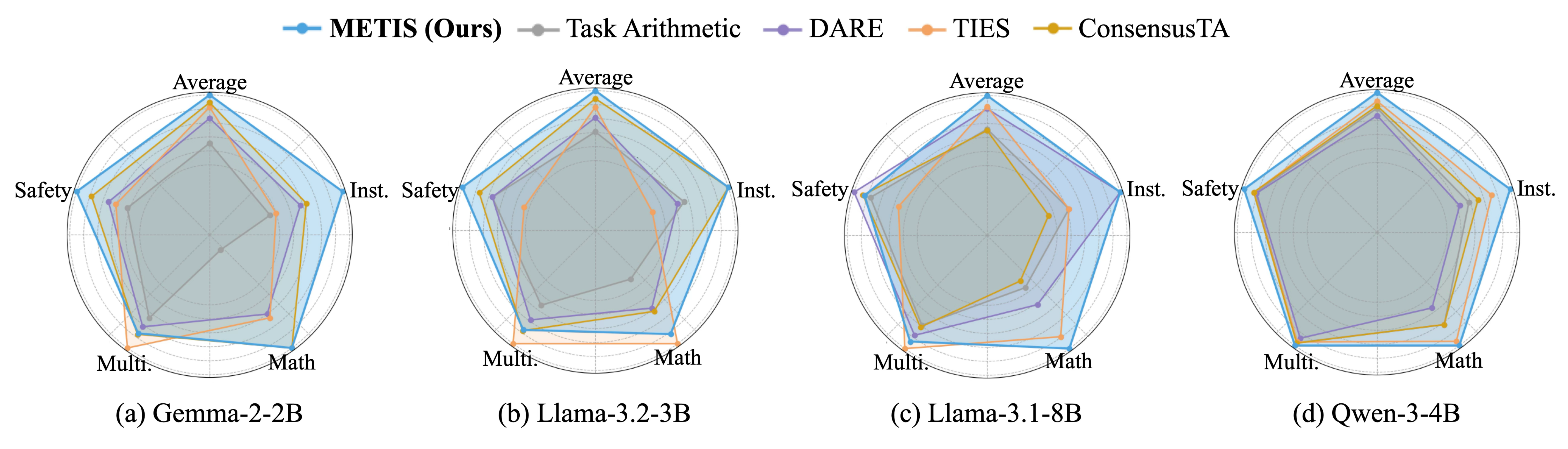}
    \caption{
    Normalized performance comparison between the proposed method and post-hoc merging baselines across four task categories (instruction-following, mathematical reasoning, multilingual understanding, and safety). Scores are normalized for each benchmark, with the best-performing method set to 1.0.
    }
    \label{fig:task_radar}
    \vspace{-10pt}
\end{figure*}

\subsection{Merging Robustness Analyses}
In this section, we examine the robustness of the proposed method across three aspects: (1) information erasure, (2) out-of-domain generalization, and (3) worst-case robustness.
\paragraph{Robustness to Information Erasure.}
Figure~\ref{fig:task_radar} compares task-level performance across benchmarks for multiple backbone LLMs. Across all backbones, the proposed method maintains balanced performance across tasks without severe degradation on any individual benchmark. In contrast, baseline methods often achieve strong performance on specific tasks while exhibiting noticeable drops on others, indicating imbalanced task integration. These results suggest that explicitly rebalancing task contributions via task-wise loss-gap aggregation enables more stable alignment of heterogeneous task objectives and effectively mitigates information erasure during iterative merging.

\paragraph{Robustness to Worst-Performing Task.} A robust multi-task model should perform well across all tasks, including the most challenging ones. To assess worst-case behavior, we evaluate the capability of merged models on their worst-performing task. Table~\ref{tab:worst_case} reports both the average performance and the performance on the worst-performing task across four backbone models. Notably, the proposed method achieves the highest performance on the worst-performing task while also attaining the best average performance. Moreover, it exhibits the smallest performance gap between the average and worst-performing tasks among all baselines, indicating more balanced task-wise capability. These results demonstrate that the proposed method improves overall multi-task robustness across tasks.

\paragraph{Robustness to Pre-trained Knowledge Forgetting.} Beyond solving in-domain tasks during merging, we further examine how well the merged LLM retains pre-trained knowledge in Table~\ref{tab:generalization}. Specifically, we evaluate pre-trained and merged models on CoQA~\cite{reddy2019coqa} and PubMedQA~\cite{jin2019pubmedqa}, benchmarks on which the pre-trained models are known to perform well. Interestingly, the proposed method consistently remains closest to the pre-trained model’s performance compared to other merging approaches. This result demonstrates that the proposed merging strategy more effectively preserves pre-trained knowledge and transferable representations under multi-task integration, which is critical for reliable generalization to unseen tasks.

\paragraph{Robustness to Number of Tasks.} We further examine whether the proposed method remains effective as the number of merged tasks increases. We evaluate our method on three benchmark sets studied in previous works: a 7-task NLP benchmark~\cite{yadav2023ties}, an 8-task question-answering benchmark~\cite{zhou2022notalltasks}, and their union, resulting in 11 tasks overall. Table~\ref{tab:manytasks} reports normalized performance under the many-shot setting on Llama-3.2-3B with 7, 8, and 11 tasks. METIS consistently achieves the highest performance across all task counts. While several baselines exhibit noticeable performance fluctuations as task diversity increases, METIS maintains strong performance, demonstrating that loss-gap-aware aggregation scales effectively to more challenging multi-task merging settings.

\subsection{Ablation Study}
To quantify the contribution of each design choice, we conduct a component-wise ablation study in which individual components are selectively disabled while all other configurations are held fixed. Specifically, we evaluate the effects of (1) many-shot merging, (2) loss-gap-aware weighting, and (3) consensus-based masking on Llama-3.2-3B. 

Figure~\ref{fig:proposed_analysis} shows that combining all proposed components achieves the best performance, demonstrating their complementary roles. Disabling any single component consistently degrades performance, indicating that each contributes meaningfully. In particular, removing many-shot merging results in the largest drop, highlighting its central role in mitigating information erasure across merging rounds. Excluding loss-gap-aware weighting also causes clear degradation, confirming its importance in balancing task contributions and reducing destructive interference. Although removing consensus masking still yields strong performance, it remains inferior to the full design.

\subsection{Sensitivity Analyses}
In this subsection, we present sensitivity analyses on the task masking threshold and backbone model size. Additional results are provided in Appendix~\ref{app:D.3}.

\paragraph{Task Masking Threshold.}
Figure~\ref{fig:sensitivity_scaling}(a) presents a sensitivity analysis of the proposed method with respect to the task masking threshold $\delta$ in~\eqref{eq:task_mask_def} on Llama-3.1-8B. As $\delta$ increases, the performance of the proposed method slightly decreases, but no abrupt degradation is observed. Moreover, even under higher threshold values, the proposed method consistently surpasses the best performance of existing merging methods. These results indicate that the proposed masking mechanism is robust to the choice of $\delta$ within a reasonable range.

\paragraph{Model Size.} Figure~\ref{fig:sensitivity_scaling}(b) reports normalized performance across backbone model sizes ranging from Qwen-3-0.6B to Qwen-3-14B~\cite{yang2025qwen3}. As model size increases, the performance of all methods generally improves, reflecting increased representational capacity. 
The proposed method follows this trend across model scales, indicating favorable scalability with respect to model size.

\begin{figure}[t]
    \centering
    \includegraphics[width=0.9\linewidth]{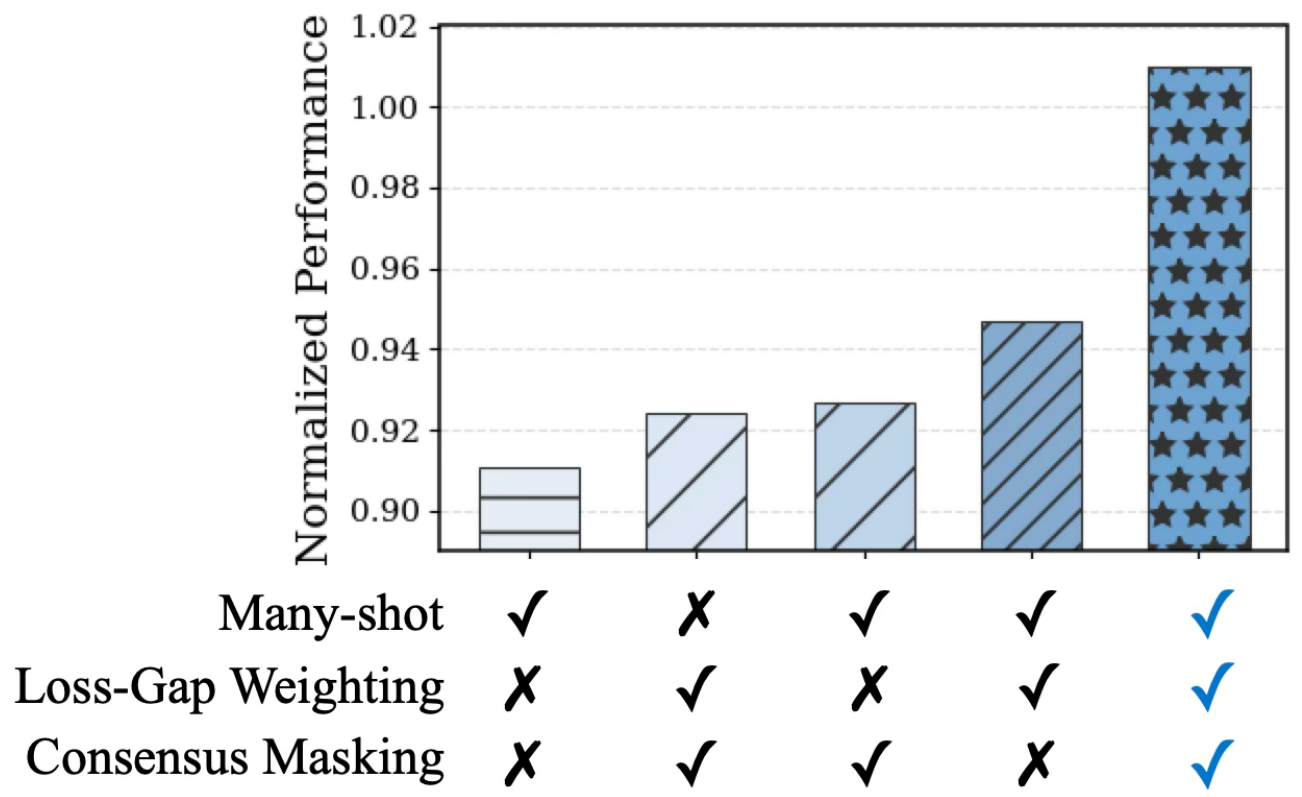}
    \caption{
    Component-wise ablation analysis of the proposed method for Llama-3.2-3B.
    }
    \label{fig:proposed_analysis}
    \vspace{-10pt}
\end{figure}

\section{Conclusion}
In this work, we revisited model merging from the perspective of iterative multi-task optimization and showed that replacing post-hoc merging with a many-shot protocol consistently improves multi-task capability. Building on this insight, we proposed METIS, a loss-aware many-shot merging method that stabilizes iterative integration through task-wise loss-gap weighting and consensus-based masking. Extensive experiments across diverse tasks, model families, and scales demonstrate that METIS achieves stronger and more balanced multi-task performance while mitigating information erasure and preserving out-of-domain generalization. These results suggest that progressive, loss-aware merging provides a principled and scalable foundation for reliable multi-task deployment of large language models.

\begin{figure}[!t]
    \centering
    \includegraphics[width=1.0\linewidth]{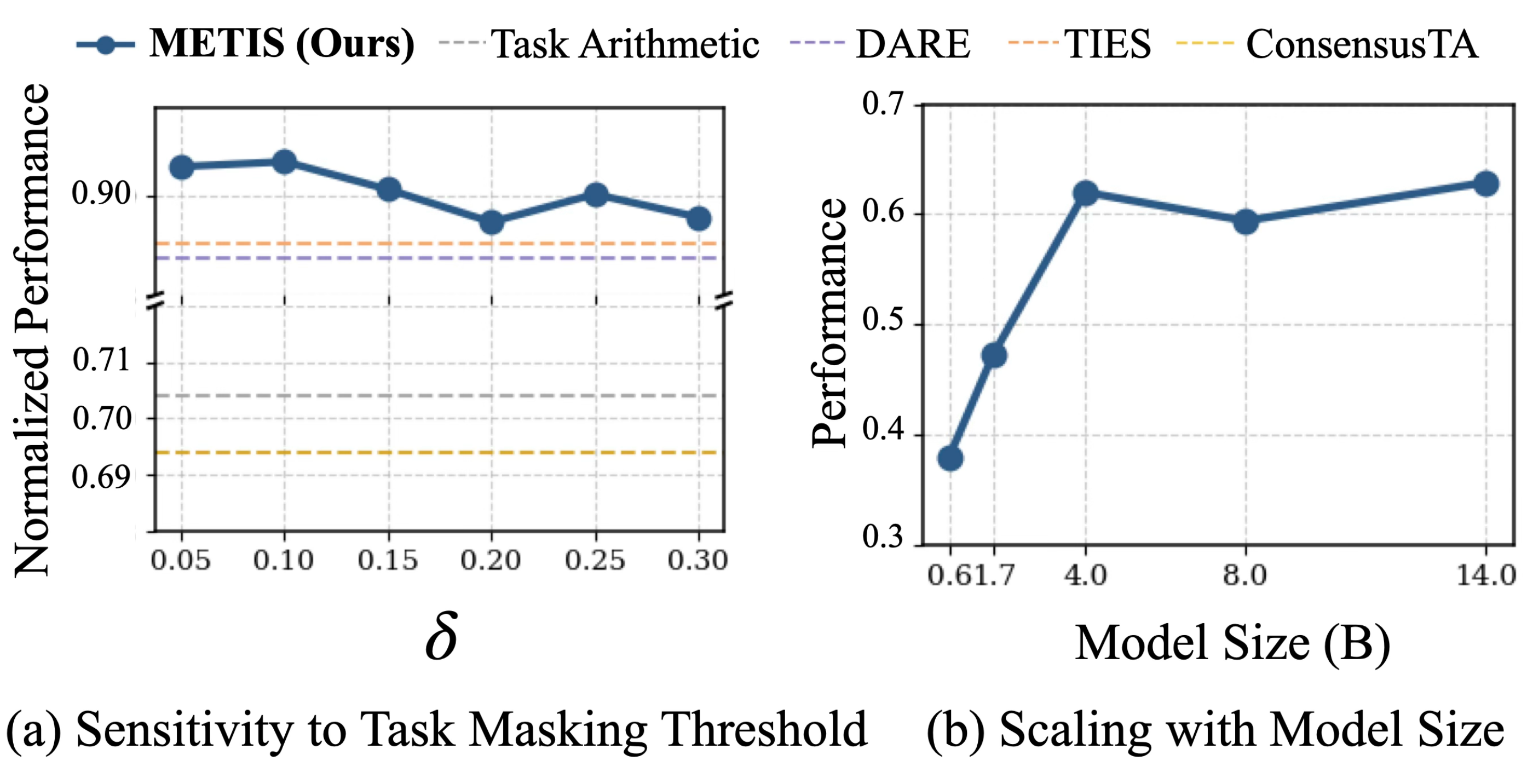}
    \caption{
    Sensitivity analysis of the proposed method.
    (a) Sensitivity to the task masking threshold $\delta$ on Llama-3.1-8B.
    (b) Performance scaling with model size across Qwen-3-0.6B, Qwen-3-1.7B, Qwen-3-4B, Qwen-3-8B, and Qwen-3-14B.
    Dashed horizontal lines indicate post-hoc merging baselines.
    }
    \label{fig:sensitivity_scaling}
    \vspace{-10pt}
\end{figure}

\section*{Impact Statement}
This work contributes to the development of more reliable and scalable multi-task LLMs by improving the stability of post-training model merging. By mitigating task interference and reducing information erasure, the proposed method can help practitioners deploy a single merged model that maintains balanced performance across diverse tasks, potentially lowering computational costs and energy consumption compared to training separate models or performing joint multi-task training. While the proposed solution greatly enhances multi-task capability, the many-shot merging framework incurs additional computational overhead due to repeated merging steps. This cost is confined to the training phase and remains modest in practice, as only a small number of updates (around five) are required. Nevertheless, it can be further mitigated by adopting more robust merging strategies within a reduced LoRA parameter space.

\section*{Acknowledgment}
This work was supported by the National Research Foundation of Korea (NRF) grant funded by the Korean government (MSIT) under Grant (RS-2023-00278812, RS-2025-02214082), the Institute of Information \& Communications Technology Planning \& Evaluation (IITP) grant funded by the Korea government (MSIT) (No. RS-2026-25519475), and Korea Institutes of Police Technology (KIPoT) funded by the Korean National Police Agency \& Korea Customs Service (RS-2026-25536784).\\
Kyungjin Im contributed to project conceptualization, algorithm development, manuscript drafting, primary experiments, rebuttal experiments, camera-ready preparation, presentation video preparation, and poster presentation.
Miru Kim contributed to project conceptualization, algorithm development, rebuttal writing, camera-ready preparation, and poster presentation.
Chanin Eom contributed to manuscript rewriting and theoretical development, including theorem formulation.
Minhae Kwon conceived and led the project, secured funding, supervised the research, guided the algorithmic and experimental development, provided extensive manuscript revision and feedback, supervised the rebuttal and camera-ready preparation processes, and advised the conference presentation preparation.

\bibliography{example_paper}

\bibliographystyle{unsrt}

\clearpage
\onecolumn
\appendix
\setcounter{section}{0}

\fontsize{12pt}{15pt}\selectfont

\newcommand{\appline}[3]{%
  \noindent\textbf{#1}\hspace{0.6em}#2\dotfill\pageref{#3}\par
}
\newcommand{\appsubline}[3]{%
  \noindent\hspace*{1.7em}\textbf{#1}\hspace{0.6em}#2\dotfill\pageref{#3}\par
}
\newcommand{\appsubsubline}[3]{%
  \noindent\hspace*{3.2em}\textbf{#1}\hspace{0.6em}#2\dotfill\pageref{#3}\par
}

\thispagestyle{empty}
\vspace*{2em}
\begin{center}
  {\LARGE\bfseries Appendix}
\end{center}
\vspace{1.2em}

\appline{A}{Notation}{app:A}

\vspace{1.6em}

\appline{B}{Theoretical Analysis}{app:B}
\appsubline{B.1}{Proof of Theorem~\ref{theo:task_loss}}{app:B1}
\appsubline{B.2}{Empirical Results Under the Condition of Theorem~\ref{theo:task_loss}}{app:B2}
\appsubline{B.3}{Proof of Theorem~\ref{theo:loss_gap}}{app:B3}

\vspace{1.6em}

\appline{C}{Experimental Detail}{app:C}
\appsubline{C.1}{Training Setup}{app:C1}
\appsubline{C.2}{Normalized Performance Metric}{app:C2}
\appsubline{C.3}{Hyperparameter Selection}{app:C3}

\vspace{1.6em}

\appline{D}{Additional Experimental Result}{app:D}
\appsubline{D.1}{Comparison with Recent Model Merging and Federated Learning Baselines}{app:D.1}
\appsubline{D.2}{Robustness to Information Erasure}{app:D.2}
\appsubline{D.3}{Sensitivity Analysis}{app:D.3}
\appsubline{D.4}{Multi-task Loss Comparison}{app:D.4}
\appsubline{D.5}{Task-wise Loss-Gap Convergence}{app:D.5}
\appsubline{D.6}{Computational Complexity Analysis}{app:D.6}
\appsubline{D.7}{Full Numeric Results}{app:D.7}

\clearpage
\section{Notation}\label{app:A}

\begin{table}[h]
\centering
\caption{Notation Table}
\label{tab:notation_table}
\setlength{\tabcolsep}{8pt}
\renewcommand{\arraystretch}{1.0}
\resizebox{\linewidth}{!}{
\begin{tabular}{l l}
\toprule
\textbf{Notation} & \textbf{Description} \\
\midrule
$\mathcal{T} = \{1,\dots,T\}$ 
& Set of tasks, where $T$ denotes the total number of tasks \\

$\tau \in \mathcal{T}$ 
& Task index \\

$\mathcal{D}_\tau$ 
& Dataset corresponding to task $\tau$ \\

$\Theta^{\texttt{0}}$ 
& Shared pre-trained model parameters used as initialization \\

$\Theta^{\texttt{r}}$ 
& Merged model parameters at merging round $\texttt{r}$ \\

$\theta_\tau^{\texttt{r}}$ 
& Locally adapted model for task $\tau$ at round $\texttt{r}$ \\

$\mathcal{L}_\tau(\cdot)$ 
& Task-specific loss function for task $\tau$ \\

$\boldsymbol{v}_\tau^{\texttt{r}}$ 
& Task vector for task $\tau$ at round $\texttt{r}$, defined as 
$\theta_\tau^{\texttt{r}} - \Theta^{\texttt{0}}$ \\

$\mathcal{M}(\cdot)$ 
& Generic merging operator applied to task vectors \\

$\mathcal{E}(\Theta^{\texttt{r}})$ 
& Multi-task loss, defined as the average of task-specific losses \\

$\mathcal{G}(\tau,\texttt{r})$ 
& Task-wise loss gap for task $\tau$ at round $\texttt{r}$ \\

$\lambda$ 
& Temperature parameter controlling the sharpness of loss-gap-aware reweighting \\

$\alpha_\tau^{\texttt{r}}$ 
& Loss-gap-aware merging weight for task $\tau$ at round $\texttt{r}$ \\

$\mathbb{V}^{\texttt{r}}$ 
& Loss-gap-aware aggregated task vector at round $\texttt{r}$ \\

$\boldsymbol{m}_\tau^{\texttt{r}}$ 
& Binary task-specific mask for task $\tau$ at round $\texttt{r}$ \\

$m_{\tau,i}^{\texttt{r}}$ 
& Mask value for the $i$-th parameter of task $\tau$ at round $\texttt{r}$ \\

$\delta$ 
& Threshold parameter controlling the strictness of task-specific masking \\

$\boldsymbol{\bar{m}}^{\texttt{r}}$ 
& Consensus mask at round $\texttt{r}$ \\

$k$ 
& Minimum number of tasks required to activate a parameter in the consensus mask \\

$\odot$ 
& Element-wise (Hadamard) product \\

$\mathbb{I}(\cdot)$ 
& Indicator function \\

$\texttt{R}$ 
& Total number of many-shot merging rounds \\

$\beta^\texttt{r}$ 
& Scaling factor at round $\texttt{r}$ controlling the magnitude of the merged update \\

\bottomrule
\end{tabular}
}
\end{table}

\clearpage
\section{Theoretical Analysis}\label{app:B}

\subsection{Proof of Theorem~\ref{theo:task_loss}}\label{app:B1}
\begin{proof}
Recall that the multi-task loss is defined as
\[
\mathcal{E}(\Theta)=\frac{1}{T}\sum_{\tau=1}^T \mathcal{L}_\tau(\Theta).
\]
Since each task loss $\mathcal{L}_\tau$ is $L$-smooth, their average $\mathcal{E}$ is also $L$-smooth. Hence, for any $x,y$,
\[
\mathcal{E}(y)\le \mathcal{E}(x)+\langle \nabla \mathcal{E}(x),y-x\rangle+\frac{L}{2}\|y-x\|^2.
\]

Let $\{\theta_\tau^{\texttt{R}}\}_{\tau=1}^T$ be the task-specific models used to construct the many-shot merged model
\[
\Theta^{\texttt{R}}=\frac{1}{T}\sum_{\tau=1}^T \theta_\tau^{\texttt{R}},
\]
and let $\{\bar{\theta}_\tau^{\texttt{R}}\}_{\tau=1}^T$ be those used to construct the post-hoc merged model
\[
\bar{\Theta}^{\texttt{R}}=\frac{1}{T}\sum_{\tau=1}^T \bar{\theta}_\tau^{\texttt{R}}.
\]
Define the within-round dispersions
\[
\xi^{\texttt{R}}=\frac{1}{T}\sum_{\tau=1}^T\|\theta_\tau^{\texttt{R}}-\Theta^{\texttt{R}}\|^2,
\qquad
\bar{\xi}^{\texttt{R}}=\frac{1}{T}\sum_{\tau=1}^T\|\bar{\theta}_\tau^{\texttt{R}}-\bar{\Theta}^{\texttt{R}}\|^2.
\]

Applying the smoothness inequality of $\mathcal{E}$ at $(x,y)=(\Theta^{\texttt{R}},\theta_\tau^{\texttt{R}})$ yields
\[
\mathcal{E}(\theta_\tau^{\texttt{R}})\le
\mathcal{E}(\Theta^{\texttt{R}})
+\langle \nabla\mathcal{E}(\Theta^{\texttt{R}}),\theta_\tau^{\texttt{R}}-\Theta^{\texttt{R}}\rangle
+\frac{L}{2}\|\theta_\tau^{\texttt{R}}-\Theta^{\texttt{R}}\|^2.
\]
Rearranging and averaging over $\tau$ eliminates the linear term due to
$\frac{1}{T}\sum_\tau(\theta_\tau^{\texttt{R}}-\Theta^{\texttt{R}})=0$, giving
\[
\mathcal{E}(\Theta^{\texttt{R}})
\le
\frac{1}{T}\sum_{\tau=1}^T \mathcal{E}(\theta_\tau^{\texttt{R}})+\frac{L}{2}\xi^{\texttt{R}}.
\]
The same argument applied to $\bar{\Theta}^{\texttt{R}}$ yields
\[
\mathcal{E}(\bar{\Theta}^{\texttt{R}})
\le
\frac{1}{T}\sum_{\tau=1}^T \mathcal{E}(\bar{\theta}_\tau^{\texttt{R}})+\frac{L}{2}\bar{\xi}^{\texttt{R}}.
\]

Subtracting the two bounds gives
\[
\mathcal{E}(\Theta^{\texttt{R}})-\mathcal{E}(\bar{\Theta}^{\texttt{R}})
\le
\frac{1}{T}\sum_{\tau=1}^T\big(\mathcal{E}(\theta_\tau^{\texttt{R}})-\mathcal{E}(\bar{\theta}_\tau^{\texttt{R}})\big)
+\frac{L}{2}\big(\xi^{\texttt{R}}-\bar{\xi}^{\texttt{R}}\big).
\]
By definition, this right-hand side equals
$\Delta(\mathcal{E},\texttt{R})+\frac{L}{2}\Delta(\xi,\texttt{R})$, which is non-positive under the stated condition. Therefore,
\[
\mathcal{E}(\Theta^{\texttt{R}})\le \mathcal{E}(\bar{\Theta}^{\texttt{R}}).
\]
\end{proof}

\subsection{Empirical Results Under the Condition of Theorem~\ref{theo:task_loss}}\label{app:B2}
We examine whether the condition of Theorem~\ref{theo:task_loss} is readily satisfied. The condition is as follows.
\begin{align}
    \Delta(\mathcal{E},\texttt{R})+\frac{L}{2}\Delta(\xi,\texttt{R})\leq0\label{eq:apx_condition}
\end{align}

Table~\ref{tab:appx_manyshot_gap} shows that the condition in~\eqref{eq:apx_condition} is commonly satisfied across representative merging algorithms. This result supports the claim that the condition required for the theoretical analysis in Theorem~\ref{theo:task_loss} is not restrictive, but rather is naturally satisfied in practice.

\begin{table}[h]
\centering
\caption{
Empirical evaluation of the condition in~\eqref{eq:apx_condition} required by Theorem~\ref{theo:task_loss}.
}
\label{tab:appx_manyshot_gap}
\setlength{\tabcolsep}{6pt}
\renewcommand{\arraystretch}{1.15}

\begin{tabular}{l c c c}
\toprule
\textbf{Algorithm} 
& \textbf{$\Delta(\mathcal{E},\texttt{R})$}
& \textbf{$\frac{L}{2}\Delta(\xi,\texttt{R})$}
& \textbf{$\Delta(\mathcal{E},\texttt{R})+\frac{L}{2}\Delta(\xi,\texttt{R})$}\\
\midrule
Task Arithmetic 
& -0.0947 
& -0.0002
& \textbf{-0.0949} \\

DARE           
& -0.1517          
& -0.0005
& \textbf{-0.1522} \\

TIES           
& -0.1948          
& -0.0001
& \textbf{-0.1949}\\

ConsensusTA    
& -0.2391 
& -0.0001 
& \textbf{-0.2392}\\
\bottomrule
\end{tabular}
\end{table}

\subsection{Proof of Theorem~\ref{theo:loss_gap}}\label{app:B3}

\begin{proof}
Let $\Theta$ denote the shared pre-merge parameters at a given round, and define
\[
d^\circ=\sum_{\tau=1}^T \tfrac{1}{T}\boldsymbol{v}_\tau,
\qquad
d^\dagger=\sum_{\tau=1}^T \alpha_\tau \boldsymbol{v}_\tau,
\quad \text{where } \alpha_\tau\ge 0,\ \sum_{\tau=1}^T \alpha_\tau=1.
\]
The corresponding merged models are
\[
\Theta^\circ=\Theta+d^\circ,
\qquad
\Theta^\dagger=\Theta+d^\dagger.
\]
Let $g_{\check{\tau}}=\nabla\mathcal{L}_{\check{\tau}}(\Theta)$.

Since $\mathcal{L}_{\check{\tau}}$ is $L$-smooth, the descent lemma gives
\[
\mathcal{L}_{\check{\tau}}(\Theta+d)
\le
\mathcal{L}_{\check{\tau}}(\Theta)
+\langle g_{\check{\tau}},d\rangle
+\frac{L}{2}\|d\|^2
\quad \text{for any } d.
\]
Applying this inequality with $d=d^\dagger$ and $d=d^\circ$ and subtracting yields
\[
\mathcal{L}_{\check{\tau}}(\Theta^\dagger)-\mathcal{L}_{\check{\tau}}(\Theta^\circ)
\le
\langle g_{\check{\tau}},d^\dagger-d^\circ\rangle
+\frac{L}{2}\big(\|d^\dagger\|^2-\|d^\circ\|^2\big).
\]
Taking expectations gives
\[
\mathbb{E}\big[\mathcal{L}_{\check{\tau}}(\Theta^\dagger)-\mathcal{L}_{\check{\tau}}(\Theta^\circ)\big]
\le
\mathbb{E}\langle g_{\check{\tau}},d^\dagger\rangle
-\mathbb{E}\langle g_{\check{\tau}},d^\circ\rangle
+\frac{L}{2}\big(\mathbb{E}\|d^\dagger\|^2-\mathbb{E}\|d^\circ\|^2\big).
\]

Under the standard bounded-interference and magnitude-control conditions,
the loss-gap-aware weighting ensures
\[
\mathbb{E}\langle g_{\check{\tau}},d^\dagger\rangle
\le
\mathbb{E}\langle g_{\check{\tau}},d^\circ\rangle,
\qquad
\mathbb{E}\|d^\dagger\|^2 \le \mathbb{E}\|d^\circ\|^2.
\]
Hence the right-hand side is non-positive, implying
\[
\mathbb{E}\big[\mathcal{L}_{\check{\tau}}(\Theta^\dagger)\big]
\le
\mathbb{E}\big[\mathcal{L}_{\check{\tau}}(\Theta^\circ)\big].
\]
\end{proof}

\clearpage
\section{Experimental Detail}\label{app:C}

All experiments were conducted using the system configuration summarized in Table~\ref{tab:system_spec}.
 
\begin{table}[h]
\centering
\caption{System Specification}
\label{tab:system_spec}
\vspace{0.3em}
\begin{tabular}{l l}
\toprule
\textbf{CPU} & AMD Ryzen Threadripper PRO 9975WX \\
\textbf{GPU} & 4$\times$ NVIDIA RTX PRO 6000 Blackwell \\
\textbf{RAM} & 128 GB \\
\textbf{SSD} & 2$\times$ 4 TB \\
\bottomrule
\end{tabular}
\end{table}

\subsection{Training Setup}\label{app:C1}
We perform a many-shot merging framework with a total of $\texttt{R}=5$ merging rounds. 
At each round, task-specific adaptation is performed for a 1 epoch using a batch size of 8.
We employ the AdamW optimizer with an initial learning rate of $3\times10^{-4}$ with a learning rate scheduler.
For parameter-efficient fine-tuning, we utilize LoRA adapters with rank $r=16$ and scaling factor $\alpha=16$, applied to the query, key, value, and output projection matrices.

\subsection{Normalized Performance Metric}\label{app:C2}
Given the varying difficulty levels across tasks, we report normalized performance using the following metric~\cite{wang2024localizing,he2025mergebench}. 
\begin{align} \frac{1}{n}\sum_{i=1}^{n}\frac{\mathrm{perf}^{(i)}_{\mathrm{merged}}}{\mathrm{perf}^{(i)}_{\mathrm{finetuned}}} \end{align} Herein, $\mathrm{perf}^{(i)}_{\mathrm{merged}}$ and $\mathrm{perf}^{(i)}_{\mathrm{finetuned}}$ are the performance of the merged and individually fine-tuned models on task $i$.

\subsection{Hyperparameter Selection}\label{app:C3}
\paragraph{Loss-gap Temperature $\lambda$.}
The temperature parameter $\lambda$ controls the sharpness of the loss-gap-aware reweighting in~\eqref{eq:loss_gap_weight}.
We tune $\lambda$ over the set $\{1, 2, 3, 4\}$ and select the optimal value based on validation performance.

\paragraph{Task-specific Masking Threshold $\delta$.}
The parameter $\delta$ regulates the sparsity--coverage trade-off in the task-specific mask defined in~\eqref{eq:task_mask_def}.
We explore $\delta \in \{0.05, 0.1, 0.15, 0.2, 0.25, 0.3\}$ for all tasks and choose the best configuration via validation.

\paragraph{Consensus Threshold $k$.}
The consensus threshold $k$ specifies the minimum number of tasks that must agree in order to activate a parameter in the consensus mask, as defined in~\eqref{eq:consensus_mask_def}.
We tune $k$ over $\{1, 2, 3\}$.

\paragraph{Scaling Factor $\beta^\texttt{r}$.}
We additionally introduce a scaling factor $\beta^\texttt{r}$, as used in prior work~\cite{ilharco2023,wang2024localizing,yadav2023ties},
defined in~\eqref{eq:masked_merge_vector} to modulate the magnitude of the aggregated task vector during merging.
The scaling factor is tuned over the range $\beta^\texttt{r} \in [1.0, 2.0]$.

\clearpage
\section{Additional Experimental Result}\label{app:D}

\subsection{Comparison with Recent Model Merging and Federated Learning Baselines}
\label{app:D.1}

\begin{table}[h]
\centering
\caption{Normalized performance of baseline methods on Llama-3.2-3B across four tasks. We compare all recent state-of-the-art merging baselines, federated/iterative approaches, data mixing, and conventional baselines.}
\label{tab:manyshot_llama32}
\resizebox{\columnwidth}{!}{
\begin{tabular}{c l >{\columncolor{proposedblue}}c c c c c c c}
\toprule
\multirow{2}{*}{\textbf{Category}} & \multirow{2}{*}{\textbf{Method}} & \multirow{2}{*}{\cellcolor{white}\textbf{Avg.}} & \textbf{Instruction} & \textbf{Math} & \multicolumn{3}{c}{\textbf{Multilingual Understanding}} & \textbf{Safety} \\
\cmidrule(lr){4-4}
\cmidrule(lr){5-5}
\cmidrule(lr){6-8}
\cmidrule(lr){9-9}
& & \cellcolor{white}& IFEval & GSM8K & M-MMLU & M-ARC & M-HellaSwag & XSTest \\
\midrule
\multirow{4}{*}{{\shortstack{State-of-the-Art\\(+Many-shot)}}}
& Iso-C                  & 0.941 & 0.792 & \textbf{1.026} & 1.018 & 0.969 & 1.005 & 0.837 \\
& Iso-CTS                & 0.920 & 0.958 & 0.897 & 1.012 & 0.988 & \textbf{1.014 }& 0.653 \\
& TSV-M                  & 0.953 & 0.750 & 0.821 & 0.964 & 0.994 & 1.005 & 1.184 \\
& TA + Subspace Boosting   & 0.885 & 0.458 & 0.744 & 1.229 & 0.957 & 0.963 & 0.959 \\

\midrule
\multirow{3}{*}{{\shortstack{Iterative / FL}}}
& FedMerge               & 0.928 & 0.875 & 0.923 & 1.012 & 0.969 & 0.991 & 0.796 \\
& q-FedAvg               & 0.858 & 0.875 & 0.846 & 0.916 & 0.988 & 0.688 & 0.837 \\
& ColD Fusion            & 0.911 & \textbf{0.917} & 0.923 & 0.837 & 0.957 & 0.977 & 0.857 \\
\midrule
\multirow{1}{*}{{\shortstack{Data-Mixing}}}
& TULU 3                   & 0.884 & 0.542 & 0.821 & 0.934 & 0.914 & 0.991 & 1.102 \\
\midrule
\multirow{4}{*}{{\shortstack{Conventional Method\\(+Many-shot)}}}
& Task Arithmetic        & 0.857 & 0.375 & 0.744 & \textbf{1.235} & 0.957 & 0.953 & 0.878 \\
& DARE                   & 0.914 & 0.792 & 0.846 & 0.904 & 0.957 & 1.005 & 0.980 \\
& TIES                   & 0.938 & 0.792 & 0.923 & 0.813 & 0.969 & 0.991 & 1.143 \\
& ConsensusTA            & 0.945 & \textbf{0.917} & 0.872 & 0.825 & 0.914 & 0.981 & 1.163 \\
\midrule
\multicolumn{2}{c}{\textbf{METIS (Ours)}}   & \textbf{1.015} & \textbf{0.917} & 0.872 & 0.910 & \textbf{1.155} & 0.991 & \textbf{1.245} \\
\bottomrule
\end{tabular}
}
\end{table}

Table~\ref{tab:manyshot_llama32} presents an extended comparison on Llama-3.2-3B across four task categories, including recent state-of-the-art merging methods, federated and iterative optimization approaches, data-mixing strategies, and conventional baselines.
METIS achieves the best overall normalized performance, demonstrating strong multi-task capability across heterogeneous baselines and confirming the effectiveness of loss-gap-aware task balancing and consensus-based masking in the many-shot merging setting.

\subsection{Robustness to Information Erasure}\label{app:D.2}

\begin{figure}[h]
    \centering
    \includegraphics[width=1.0\linewidth]{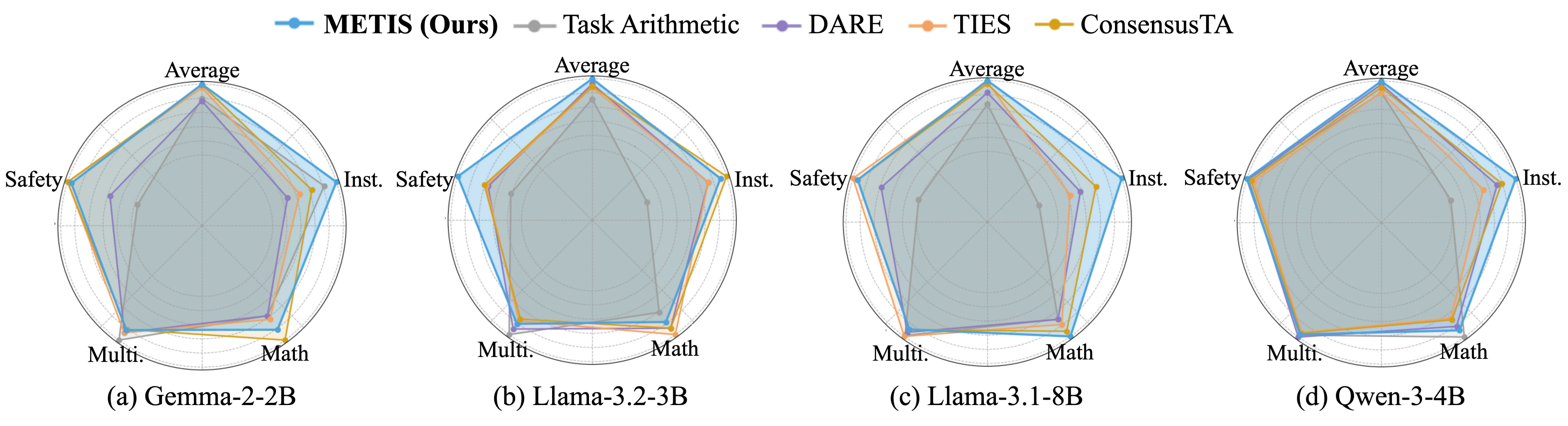}
    \caption{Normalized performance comparison between the proposed method and many-shot merging baselines across four task categories (instruction-following, mathematical reasoning, multilingual understanding, and safety). Scores are normalized for each benchmark, with the best-performing method set to 1.0.}
    \label{fig:manyshot_radar}
\end{figure}

Figure~\ref{fig:manyshot_radar} reports normalized performance comparisons between the proposed method and many-shot merging baselines across four task categories:
instruction-following, mathematical reasoning, multilingual understanding, and safety.
These results indicate that the proposed method maintains balanced performance across task categories, demonstrating robustness to information erasure under many-shot merging.

\subsection{Sensitivity Analysis}\label{app:D.3}
\begin{figure*}[h]
    \centering
    \includegraphics[width=0.95\linewidth]{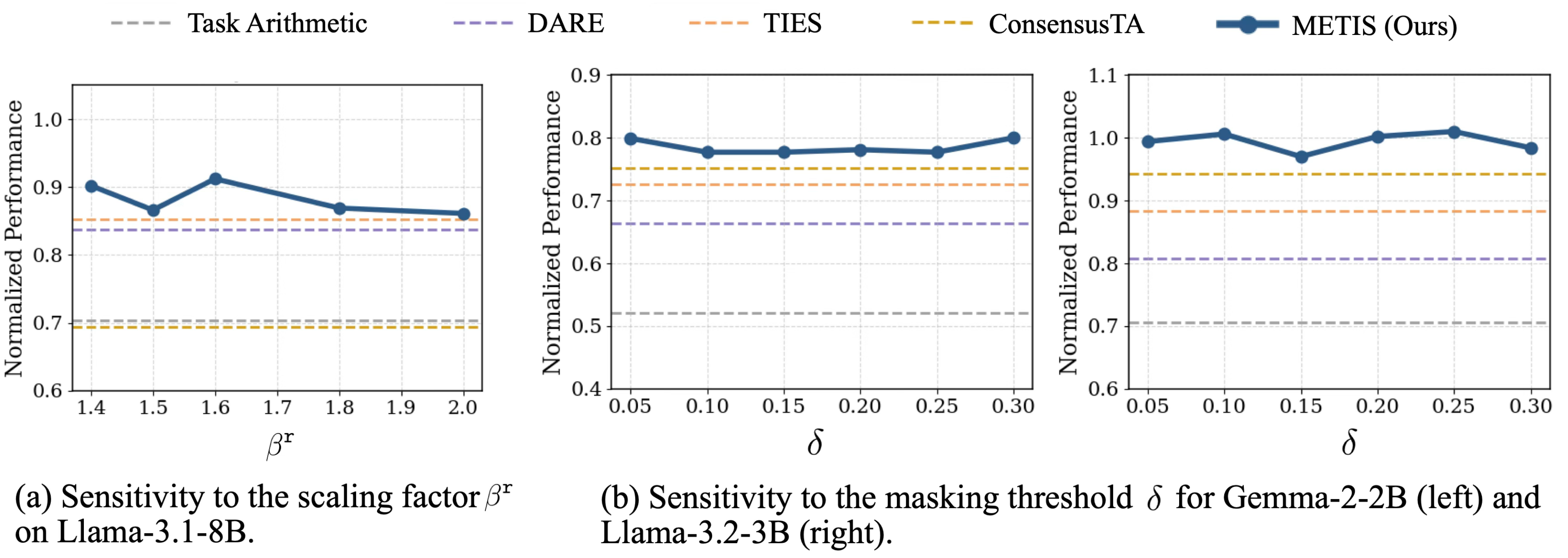}
    \caption{
    Sensitivity analysis of the proposed method.
    (a) Sensitivity to the scaling factor $\beta^\texttt{r}$ on Llama-3.1-8B.
    (b) Sensitivity to the masking threshold $\delta$ on Gemma-2-2B and Llama-3.2-3B.
    Dashed horizontal lines indicate the performance of representative post-hoc merging baselines.
    }
    \label{fig:appendix_sensitivity}
\end{figure*}

\paragraph{Sensitivity to the Scaling Factor $\beta^\texttt{r}$.}
Figure~8(a) reports the performance trends of the proposed method as the effective scaling magnitude of $\beta^r$ is varied on Llama-3.1-8B, where the x-axis corresponds to representative $\beta^r$ values selected to progressively approach $1.0$ as the merging rounds increase under a round-dependent schedule. The results show that the proposed method remains stable across a reasonable range of $\beta^r$ values, indicating robustness to the scaling magnitude used during merging.

\paragraph{Sensitivity to the Masking Threshold $\delta$.}
Figure~\ref{fig:appendix_sensitivity}(b) presents a sensitivity analysis of the proposed method with respect to the masking threshold $\delta$ defined in~\eqref{eq:task_mask_def}, evaluated on Gemma-2-2B and Llama-3.2-3B.
Across different $\delta$ values, the proposed method consistently outperforms representative post-hoc merging baselines.
Moreover, the performance remains stable under varying masking thresholds, indicating robustness to the choice of $\delta$ in the proposed masking mechanism.

\subsection{Multi-task Loss Comparison}\label{app:D.4}

\begin{figure*}[h]
    \centering
    \includegraphics[width=1.0\linewidth]{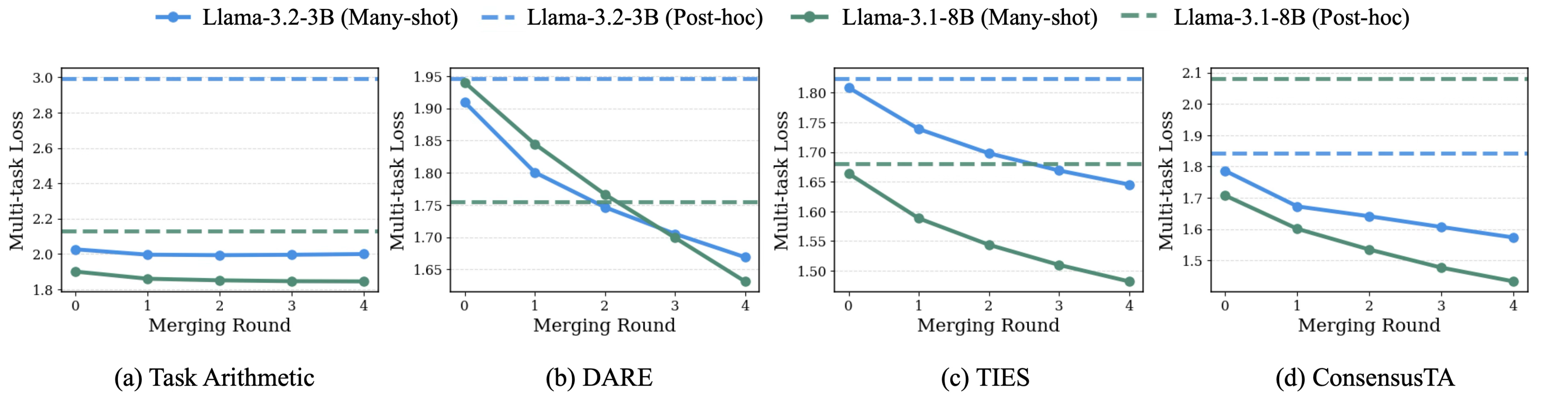}
    \caption{
    Comparison of post-hoc and many-shot merging dynamics across representative baselines.
    Many-shot merging consistently reduces multi-task loss over merging rounds, while post-hoc merging corresponds to a single integration step.
    }
    \label{fig:appendix_baseline_round}
\end{figure*}

Figure~\ref{fig:appendix_baseline_round} compares representative post-hoc merging methods, including Task Arithmetic, DARE, TIES, and ConsensusTA, under post-hoc and many-shot merging settings.
Results are reported for Llama-3.2-3B and Llama-3.1-8B.
Across all methods, many-shot merging Strategy consistently reduces the multi-task loss.
This empirical observation is consistent with the theoretical analysis Theorem~\ref{theo:task_loss} regarding iterative merging in multi-task settings.

\subsection{Task-wise Loss-Gap Convergence}\label{app:D.5}

\begin{figure*}[h]
    \centering
    \includegraphics[width=1.0\linewidth]{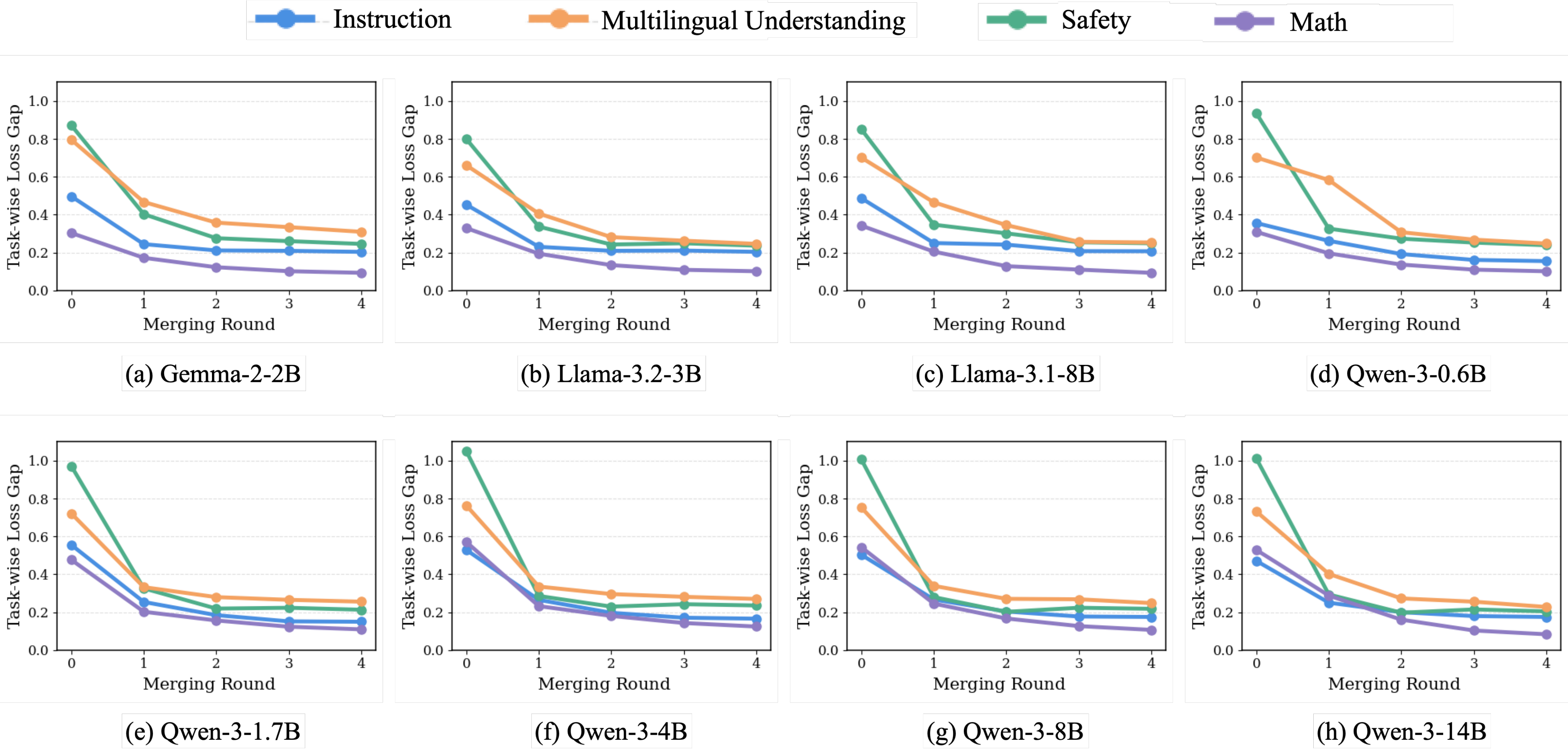}
    \caption{
    Task-wise loss-gap convergence across merging rounds for different base models.
    Loss gaps consistently decrease, indicating progressive mitigation of information erasure during many-shot merging.
    }
    \label{fig:appendix_lossgap}
\end{figure*}

Figure~\ref{fig:appendix_lossgap} reports the evolution of task-wise loss gaps defined in~\eqref{eq:loss_gap_def} across merging rounds for multiple base models and task categories.
As merging progresses, task-wise loss gaps generally decrease, indicating that discrepancies between the merged model and task-specific models are reduced over rounds.
Tasks that exhibit larger loss gaps in earlier rounds show more noticeable reductions in later rounds, which is consistent with the use of loss-gap-aware task weighting.

\subsection{Computational Complexity Analysis}\label{app:D.6}

The proposed method achieves the best overall and worst-task performance while maintaining competitive efficiency.
Table~\ref{tab:performance_time_comparison} compares average performance, worst-performing task performance, and runtime across representative many-shot merging methods.
\begin{table}[h]
\centering
\caption{Comparison of performance and computational efficiency under the many-shot setting.}
\label{tab:performance_time_comparison}
\resizebox{0.6\linewidth}{!}{
\begin{tabular}{l|ccc}
\toprule
\textbf{Method (+Many-shot Setting)} 
& \textbf{Average} 
& \textbf{Worst task} 
& \textbf{Runtime(s)} \\
\midrule
Task Arithmetic & 0.857 & 0.375 & \textbf{883.48} \\
DARE            & 0.914 & 0.792 & 874.89 \\
TIES            & 0.938 & 0.792 & 900.58 \\
ConsensusTA     & 0.945 & 0.825 & 904.79 \\
Iso-C           & 0.941 & 0.792 & 1050.18 \\
Iso-CTS         & 0.920 & 0.653 & 1225.03 \\
TA+Subspace Boosting & 0.885 & 0.458 & 1252.42 \\
TSV-M           & 0.953 & 0.750 & 3219.07 \\
\midrule
\textbf{METIS (Ours)} & \textbf{1.015} & \textbf{0.872} & 918.53 \\
\bottomrule
\end{tabular}
}
\end{table}
While our method incurs slightly higher runtime than standard baselines such as Task Arithmetic, DARE, TIES, and ConsensusTA, the overhead remains moderate.
Moreover, it is substantially more efficient than stronger baselines such as TSV-M and Iso-CTS.
The proposed method achieves the best average and worst-task performance, demonstrating a strong efficiency--performance trade-off.

\subsection{Full Numeric Results}\label{app:D.7}
We report the full numeric results of multi-task performance for all merging algorithms in Tables~\ref{tab:2b}, \ref{tab:3b}, \ref{tab:8b}, and \ref{tab:4b}.
All results in this subsection correspond to unnormalized performance scores.

\begin{table*}[h]
\centering
\caption{Gemma-2-2B per-task and average results.}
\label{tab:2b}
\resizebox{\textwidth}{!}{
\begin{tabular}{l c| c c c c c c c}
\toprule
\multirow[1]{2}{*}{\textbf{Method}}
& \multirow[1]{2}{*}{\textbf{Many-shot}}
& \multirow[1]{2}{*}{\centering \textbf{Avg.}}
& \multicolumn{1}{c}{\textbf{Instruction}}
& \multicolumn{1}{c}{\textbf{Math}}
& \multicolumn{3}{c}{\textbf{Multilingual Understanding}}
& \multicolumn{1}{c}{\textbf{Safety}} \\
\cmidrule(lr){4-4}
\cmidrule(lr){5-5}
\cmidrule(lr){6-8}
\cmidrule(lr){9-9}
& 
& 
& IFEval
& GSM8K
& M-MMLU
& M-ARC
& M-HellaSwag
& XSTest \\
\midrule

Task Arithmetic & $\times$ & 0.263 & 0.100 & 0.040 & 0.285 & 0.298 & 0.415 & 0.440 \\
{} & $\checkmark$ & 0.351 & 0.200 & 0.260 & 0.345 & 0.410 & 0.543 & 0.350 \\
\midrule
DARE & $\times$ & 0.334 & 0.150 & 0.210 & 0.315 & 0.318 & 0.473 & 0.540 \\
{} & $\checkmark$ & 0.353 & 0.140 & 0.260 & 0.300 & 0.398 & 0.520 & 0.500 \\
\midrule
TIES & $\times$ & 0.365 & 0.110 & 0.220 & 0.365 & 0.430 & 0.563 & 0.500 \\
{} & $\checkmark$ & 0.397 & 0.160 & 0.270 & 0.295 & 0.393 & 0.535 & 0.730 \\
\midrule
ConsensusTA & $\times$ & 0.382 & 0.160 & 0.300 & 0.313 & 0.353 & 0.538 & 0.630 \\
{} & $\checkmark$ & 0.406 & 0.180 & 0.330 & 0.250 & 0.393 & 0.553 & 0.730 \\
\midrule
\textbf{METIS (Ours)} & $\checkmark$ & 0.410 & 0.210 & 0.310 & 0.250 & 0.390 & 0.558 & 0.740 \\
\bottomrule
\end{tabular}
}
\end{table*}

\begin{table*}[h]
\centering
\caption{Llama-3.2-3B per-task and average results.}
\label{tab:3b}
\resizebox{\textwidth}{!}{
\begin{tabular}{l c| c c c c c c c}
\toprule
\multirow[1]{2}{*}{\textbf{Method}}
& \multirow[1]{2}{*}{\textbf{Many-shot}}
& \multirow[1]{2}{*}{\centering \textbf{Avg.}}
& \multicolumn{1}{c}{\textbf{Instruction}}
& \multicolumn{1}{c}{\textbf{Math}}
& \multicolumn{3}{c}{\textbf{Multilingual Understanding}}
& \multicolumn{1}{c}{\textbf{Safety}} \\
\cmidrule(lr){4-4}
\cmidrule(lr){5-5}
\cmidrule(lr){6-8}
\cmidrule(lr){9-9}
& 
& 
& IFEval
& GSM8K
& M-MMLU
& M-ARC
& M-HellaSwag
& XSTest \\
\midrule

Task Arithmetic & $\times$ & 0.303 & 0.140 & 0.150 & 0.243 & 0.315 & 0.420 & 0.550 \\
{} & $\checkmark$ & 0.370 & 0.090 & 0.290 & 0.513 & 0.388 & 0.513 & 0.430 \\
\midrule
DARE & $\times$ & 0.348 & 0.130 & 0.240 & 0.313 & 0.358 & 0.498 & 0.550 \\
{} & $\checkmark$ & 0.384 & 0.190 & 0.330 & 0.375 & 0.388 & 0.540 & 0.480 \\
\midrule
TIES & $\times$ & 0.381 & 0.090 & 0.350 & 0.520 & 0.388 & 0.558 & 0.380 \\
{} & $\checkmark$ & 0.395 & 0.190 & 0.360 & 0.338 & 0.393 & 0.533 & 0.560 \\
\midrule
ConsensusTA & $\times$ & 0.398 & 0.210 & 0.250 & 0.390 & 0.380 & 0.535 & 0.620 \\
{} & $\checkmark$ & 0.395 & 0.220 & 0.340 & 0.343 & 0.370 & 0.528 & 0.570 \\
\midrule
\textbf{METIS (Ours)} & $\checkmark$ & 0.425 & 0.220 & 0.340 & 0.378 & 0.468 & 0.533 & 0.610 \\
\bottomrule
\end{tabular}
}
\end{table*}

\begin{table*}[h]
\centering
\caption{Llama-3.1-8B per-task and average results.}
\label{tab:8b}
\resizebox{\textwidth}{!}{
\begin{tabular}{l c| c c c c c c c}
\toprule
\multirow[1]{2}{*}{\textbf{Method}}
& \multirow[1]{2}{*}{\textbf{Many-shot}}
& \multirow[1]{2}{*}{\centering \textbf{Avg.}}
& \multicolumn{1}{c}{\textbf{Instruction}}
& \multicolumn{1}{c}{\textbf{Math}}
& \multicolumn{3}{c}{\textbf{Multilingual Understanding}}
& \multicolumn{1}{c}{\textbf{Safety}} \\
\cmidrule(lr){4-4}
\cmidrule(lr){5-5}
\cmidrule(lr){6-8}
\cmidrule(lr){9-9}
& 
& 
& IFEval
& GSM8K
& M-MMLU
& M-ARC
& M-HellaSwag
& XSTest \\
\midrule

Task Arithmetic & $\times$ & 0.435 & 0.160 & 0.310 & 0.397 & 0.413 & 0.540 & 0.790 \\
{} & $\checkmark$ & 0.466 & 0.100 & 0.570 & 0.595 & 0.493 & 0.598 & 0.440 \\
\midrule
DARE & $\times$ & 0.512 & 0.260 & 0.410 & 0.475 & 0.440 & 0.588 & 0.900 \\
{} & $\checkmark$ & 0.512 & 0.180 & 0.570 & 0.570 & 0.458 & 0.613 & 0.680 \\
\midrule
TIES & $\times$ & 0.510 & 0.160 & 0.600 & 0.578 & 0.490 & 0.633 & 0.600 \\
{} & $\checkmark$ & 0.553 & 0.160 & 0.600 & 0.578 & 0.490 & 0.633 & 0.860 \\
\midrule
ConsensusTA & $\times$ & 0.435 & 0.120 & 0.270 & 0.405 & 0.423 & 0.550 & 0.840 \\
{} & $\checkmark$ & 0.547 & 0.210 & 0.640 & 0.495 & 0.475 & 0.630 & 0.830 \\
\midrule
\textbf{METIS (Ours)} & $\checkmark$ & 0.566 & 0.240 & 0.670 & 0.648 & 0.448 & 0.550 & 0.840 \\
\bottomrule
\end{tabular}
}
\end{table*}

\begin{table*}[h]
\centering
\caption{Qwen-3-4B per-task and average results.}
\label{tab:4b}
\resizebox{\textwidth}{!}{
\begin{tabular}{l c| c c c c c c c}
\toprule
\multirow[1]{2}{*}{\textbf{Method}}
& \multirow[1]{2}{*}{\textbf{Many-shot}}
& \multirow[1]{2}{*}{\centering \textbf{Avg.}}
& \multicolumn{1}{c}{\textbf{Instruction}}
& \multicolumn{1}{c}{\textbf{Math}}
& \multicolumn{3}{c}{\textbf{Multilingual Understanding}}
& \multicolumn{1}{c}{\textbf{Safety}} \\
\cmidrule(lr){4-4}
\cmidrule(lr){5-5}
\cmidrule(lr){6-8}
\cmidrule(lr){9-9}
& 
& 
& IFEval
& GSM8K
& M-MMLU
& M-ARC
& M-HellaSwag
& XSTest \\
\midrule
Task Arithmetic & $\times$ & 0.559 & 0.200 & 0.660 & 0.600 & 0.460 & 0.553 & 0.880 \\
{} & $\checkmark$ & 0.604 & 0.150 & 0.860 & 0.645 & 0.485 & 0.533 & 0.950 \\
\midrule
DARE & $\times$ & 0.524 & 0.180 & 0.540 & 0.603 & 0.423 & 0.528 & 0.870 \\
{} & $\checkmark$ & 0.613 & 0.250 & 0.780 & 0.675 & 0.470 & 0.553 & 0.950 \\
\midrule
TIES & $\times$ & 0.586 & 0.250 & 0.780 & 0.590 & 0.485 & 0.520 & 0.890 \\
{} & $\checkmark$ & 0.580 & 0.220 & 0.720 & 0.630 & 0.463 & 0.535 & 0.910 \\
\midrule
ConsensusTA & $\times$ & 0.564 & 0.220 & 0.660 & 0.597 & 0.468 & 0.550 & 0.890 \\
{} & $\checkmark$ & 0.594 & 0.260 & 0.730 & 0.645 & 0.453 & 0.545 & 0.930 \\
\midrule
\textbf{METIS (Ours)} & $\checkmark$ & 0.620 & 0.290 & 0.810 & 0.653 & 0.458 & 0.548 & 0.960 \\
\bottomrule
\end{tabular}
}
\end{table*}

\end{document}